\newcommand\thickbar[1]{\accentset{\rule{.365em}{.5pt}}{#1}}
\newcommand{\X}{\mathcal{X}}
\newcommand{\Y}{\mathcal{Y}}
\newcommand{\Z}{\mathcal{Z}}
\newcommand{\A}{\mathcal{A}}
\renewcommand{\H}{\mathcal{H}}
\newcommand{\trans}{^{\scriptscriptstyle \top}}
\newcommand{\R}{\mathbb{R}}
\newcommand{\N}{\mathbb{N}}
\newcommand{\la}{\lambda}
\newcommand{\tr}{\ensuremath{\text{\rm Tr}}}
\newcommand{\argmin}{\operatornamewithlimits{argmin}}
\newcommand{\ee}{{\mathcal{E}}}
\newcommand{\rx}{{R}}
\newcommand{\Zn}{Z}
\newcommand{\LL}{\mathcal{L}}
\newcommand{\cR}{\mathcal{R}}
\newcommand{\T}{\mathcal{T}}
\newcommand{\Real}{\mathbb{R}}
\newcommand{\Exp}{\mathbb{E}}
\newcommand{\vertiii}[1]{{\left\vert\kern-0.25ex\left\vert\kern-0.25ex\left\vert #1 
    \right\vert\kern-0.25ex\right\vert\kern-0.25ex\right\vert}}
\declaretheorem[name=Theorem,refname=Thm.]{theorem}
\declaretheorem[name=Lemma,sibling=theorem]{lemma}
\declaretheorem[name=Proposition,refname=Prop.,sibling=theorem]{proposition}
\declaretheorem[name=Remark]{remark}
\declaretheorem[name=Assumption,refname=Asm.]{assumption}
\declaretheorem[name=Example]{example}
 \renewenvironment{proof}[1][\proofname]{\par
   \pushQED{\qed}
   \normalfont \topsep6\p@\@plus6\p@\relax
   \trivlist
   \item[\hskip\labelsep
         \bfseries
     #1\@addpunct{.}]\ignorespaces
 }{
   \popQED\endtrivlist\@endpefalse
 }
\def\eop{$\rule{1.3ex}{1.3ex}$}
\renewcommand\qedsymbol\eop
\newcommand{\task}{\mu}  
\newcommand{\Ss}{\mathcal{S}}  
\newcommand{\Tt}{\mathcal{M}}   
\newcommand{\D}{\mathcal{D}}  
\newcommand{\env}{\rho}  
\newcommand{\ms}{{\env_\Ss}}  
\newcommand{\mt}{{\env_\Tt}}  
\crefname{assumption}{Asm.}{Asm.}
\crefname{equation}{}{}
\crefname{figure}{Fig.}{Fig.}
\crefname{table}{Tab.}{Tab.}
\crefname{section}{Sec.}{Sec.}
\crefname{theorem}{Thm.}{Thm.}
\crefname{proposition}{Prop.}{Prop.}
\crefname{fact}{Fact}{Facts}
\crefname{lemma}{Lemma}{Lemmas}
\crefname{corollary}{Cor.}{Cor.}
\crefname{example}{Ex.}{Ex.}
\crefname{remark}{Rem.}{Rem.}
\crefname{algorithm}{Alg.}{Algorithms}
\crefname{appendix}{App.}{App.}
\crefname{algorithm}{Alg.}{Alg.}
\providecommand{\scal}[2]{\left\langle{#1},{#2}\right\rangle}
\providecommand{\nor}[1]{\left\|{#1}\right\|}
\renewcommand{\paragraph}[1]{{\bfseries #1.}}
\renewcommand{\P}{{\mathcal{P}}}
\newcommand{\hh}{{\mathcal{H}}}
\providecommand{\scal}[2]{\left\langle{#1},{#2}\right\rangle}
\providecommand{\nor}[1]{\left\|{#1}\right\|}
\title{\bf The Advantage of Conditional Meta-Learning for 
Biased Regularization and Fine-Tuning
\vspace{1.0truecm}
}
\author{Giulia Denevi$^{1}$ ~~~~~~~~~~~~~ 
Massimiliano Pontil$^{1,2}$ ~~~~~~~~~~~~~ Carlo Ciliberto$^{3}$ \\ 
{\small \hspace*{-2.0em} ~~~~~~~~~~~~ giulia.denevi@iit.it ~~~~~~~~~~~~~~~~~ massimiliano.pontil@iit.it ~~~~~~~~~~~~~~~~~ c.ciliberto@imperial.ac.uk} \vspace{.1cm} \\
\small{$^1$ Computational Statistics and Machine Learning, Istituto Italiano di Tecnologia, Genova, Italy} \\
\small{$^2$ Computer Science Dept., University College of London, London, United Kingdom} \\
\small{$^3$ Electrical and Electronic Engineering Dept., Imperial College of London, London, United Kingdom}}
\begin{document}

\maketitle

\begin{abstract}
\noindent Biased regularization and fine-tuning are two recent meta-learning approaches. They have been shown to be effective to tackle distributions of tasks, in which the tasks' target vectors are all close to a common meta-parameter vector.
However, these methods may perform poorly on heterogeneous environments of tasks,  where the complexity of the tasks' distribution cannot be captured by a single meta-parameter vector. We address this limitation by conditional meta-learning, inferring a  conditioning function mapping task's side information into a meta-parameter vector that is appropriate for that task at hand. We characterize properties of the environment under which the conditional approach brings a substantial advantage over standard meta-learning and we highlight examples of environments, such as those with multiple clusters, satisfying these properties. We then propose a convex meta-algorithm providing a comparable advantage also in practice. Numerical experiments confirm our theoretical findings.
\end{abstract}


\section{Introduction}
\label{Introduction}

Biased regularization and fine-tuning 
\cite{pmlr-v70-finn17a,finn2019online,nichol2018first, maurer2009transfer,pentina2014pac,denevi2019learning,denevi2019online,balcan2019provable,khodak2019adaptive,ji2020multi,fallah2019convergence} 
are two recent meta-learning techniques that transfer knowledge across an environment of tasks by leveraging a common meta-parameter vector. Their origin and inspiration go back to multi-task and transfer learning methods \cite{evgeniou2004regularized,maurer2006,cavallanti2010linear}, designed to address a prescribed set of tasks with low variance. These techniques can be described as a nested optimization scheme: while at the within-task level, an inner algorithm performs tasks' specific optimization with the current meta-parameter vector, at the meta-level a meta-algorithm updates the aforementioned meta-parameter by leveraging the experience accumulated from the tasks observed so far. In biased regularization the inner algorithm is given by the within-task regularized empirical risk minimizer and the meta-parameter vector plays the role of a bias in the regularizer, while fine-tuning employs online gradient descent as the within-task algorithm and the meta-parameter vector is the associated starting point.

Despite their success, the above methods may fail to adapt to heterogenous 
environments of tasks, in which the complexity of the tasks' distribution cannot 
be captured by a single meta-parameter vector. 
In literature, a variety of methods have tried to address this limitation by clustering the tasks and, then, leveraging tasks' similarities within each cluster
\cite{argyriou2008algorithm,maurer2012transfer,argyriou2013learning,jacob2009clustered,Andrew}.
However, such methods usually lead to non-convex formulations 
\cite{argyriou2008algorithm,argyriou2013learning}  
or provide only partial guarantees on surrogate convex problems \cite{jacob2009clustered,Andrew}. As alternative, recent approaches 
in meta-learning literature advocated learning a conditioning function that maps a task's dataset into a meta-parameter vector that is appropriate for the task at hand~\cite{wang2020structured,vuorio2019multimodal,rusu2018meta,jerfel2019reconciling,cai2020weighted,yao2019hierarchically}. 
This perspective has been shown to be promising in applications, however theoretical investigations are still lacking. 
In this work, we address the limitation above for biased regularization and 
fine-tuning by developing a new conditional meta-learning framework.
Specifically, we consider an environment of tasks provided 
with additional side information and we learn a conditioning function mapping 
task's side information into a task's specific meta-parameter vector. We then
provide a statistical analysis demonstrating the potential advantage 
of our method over standard meta-learning. \\

\paragraph{Contributions and organization} Our work offers four contributions. First, in \cref{unconditional_for_bias_introduction}, we introduce a new conditional 
meta-learning framework with side information for biased regularization and 
fine-tuning. Second, in \cref{unconditional_for_bias_advantage}, we formally 
show that, under certain assumptions, this conditional meta-learning approach results 
to be significantly advantageous w.r.t. the standard unconditional counterpart. 
We then describe two common settings in which such conditions are 
satisfied, supporting the potential importance of our study for real-world scenarios.
Third, in \cref{proposed_method}, we propose a convex meta-algorithm 
providing a comparable advantage also in practice, as the number of observed 
tasks increases. Fourth, in \cref{experiments}, we present numerical experiments in 
which we test our theory and the performance of our method. Our conclusions
are drawn in \cref{conclusion} and technical proofs are postponed to the appendix.


\section{Conditional meta-learning}
\label{unconditional_for_bias_introduction}

In this section we describe and contrast the conditional meta-learning setting 
with side information to standard meta-learning. We first introduce the class 
of inner algorithms we consider in this work.\\

\paragraph{Inner algorithms (linear supervised learning)}
Let $\Z = \X\times\Y$ with $\X\subseteq\R^d$ and $\Y\subseteq\R$ 
input and output spaces, respectively. Let $\P(\Z)$ be the set of probability 
distributions (tasks) over $\Z$. Given $\mu \in {\cal P}(\Z)$ and a loss 
function $\ell:\R\times\R\to\R$, our goal is to find a weight vector 
$w_\task\in\R^d$ minimizing the {\itshape expected risk}
\begin{equation} \label{single_task_risk}
\min_{w\in\R^d}~\cR_\task(w) 
\quad \quad \quad
\cR_\task(w) = \Exp_{(x,y)\sim\task}~\ell \bigl(\scal{ x}{w}, y \bigr),
\end{equation}
where, $\scal{x}{w}$ denotes the standard inner product between $x$ and $w\in\R^d$. 
In practice, $\task$ is unknown and only accessible trough a training dataset 
$\Zn = (x_i, y_i)_{i = 1}^n \sim \task^n$ of i.i.d. (identically independently 
distributed) points sampled from $\Zn$. The goal of a learning algorithm is to find a candidate weight vector incurring a small expected risk converging to the ideal $\cR_\mu(w_\mu)$ as $n$ grows.

In this work we will focus on the family of learning algorithms 
performing biased regularized empirical risk minimization. 
Formally, given $\D = \bigcup_{n\in\N}\Z^n$ the space of all datasets (of any 
finite cardinality $n$) on $\Z$ and a bias vector $\theta\in\Theta=\R^d$, we will consider learning algorithms $A(\theta, \cdot):\D\to\R^d$ such that,
\begin{equation} \label{RERM_bias}
A(\theta, \Zn) = \argmin_{w \in \Real^d} ~ \cR_\Zn^\la(w)
\quad \quad \quad 
\cR_\Zn^\la(w) = \frac{1}{n} \sum_{i = 1}^n \ell(\langle x_i, w \rangle, y_i) 
+ \frac{\la}{2} \| w - \theta \|^2,
\end{equation}
for any $Z = (x_i,y_i)_{i=1}^n$. Here $
\nor{\cdot}$ denotes the Euclidean norm on $\R^d$ and $\la>0$ is a regularization parameter encouraging the algorithm $A(\theta,\cdot)$ to predict weight vectors that are close to $\theta$. 
We denote by $\cR_Z(\cdot) = \cR_Z^0(\cdot)$ the empirical risk associated to $\Zn$. 
\begin{remark}[Fine-tuning] \label{online_algorithm_remark}
In this work we primarily focus on the family \cref{RERM_bias} of batch inner algorithms. However, following  \cite{denevi2019learning, denevi2019online}, it is 
possible to extend our analysis to fine-tuning algorithms performing online gradient descent on $\cR_Z^\la$, with starting point $w_1 = \theta\in\R^d$, namely
\begin{equation} \label{online_inner_algorithm}
A(\theta, \Zn) = \frac{1}{n} \sum_{i = 1}^n w_i,
\qquad 
w_{i+1} = w_i - \frac{s_i x_i - \la (w_i - \theta) }{\la i},
\qquad s_i \in \partial \ell(\cdot, y_i)(\langle x_i, w_i \rangle).
\end{equation}
\end{remark}

\paragraph{(Unconditional) meta-learning} 
Given a meta-distribution $\rho  \in \P(\Tt)$ (or {\itshape environment} \cite{baxter2000model}) 
over a family $\Tt\subseteq\P(\Z)$ of distributions (tasks) $\task$, 
meta-learning aims to learn an inner algorithm in the family that is {well suited to 
tasks $\task$ sampled from $\env$}. This goal can be reformulated 
as finding a meta-parameter $\theta_\env \in \Theta$ whose associated 
algorithm $A(\theta_\env,\cdot)$ minimizes the \emph{transfer risk}
\begin{equation} \label{meta_learning_problem_1}
\min_{\theta \in \Theta} ~ \ee_\env(\theta)
\quad \quad \quad 
\ee_\env(\theta) = \Exp_{\task \sim \env} ~ \Exp_{\Zn \sim \task^n}
~ \cR_\task \bigl( A(\theta, \Zn) \bigr).
\end{equation}
Standard meta-learning methods \cite{pmlr-v70-finn17a,finn2019online,denevi2019learning,denevi2019online,balcan2019provable,khodak2019adaptive}  
usually address this problem via stochastic methods. They
iteratively sample a task $\task\sim\env$ and a dataset $\Zn \sim \task^n$, 
and, then, they perform a step of stochastic 
gradient descent on a surrogate problem of \cref{meta_learning_problem_1} 
computed by using $\Zn$.

Although remarkably effective in many applications 
\cite{finn2019online,balcan2019provable,khodak2019adaptive,denevi2019online,
pmlr-v70-finn17a,denevi2019learning}, the framework above implicitly assumes 
that a single bias vector is sufficient for the entire family of tasks sampled 
from $\env$. Since this assumption may not hold for more complex meta-distributions 
(e.g. multi-clusters), recent works have advocated a conditional perspective to tackle this 
problem \cite{wang2020structured,vuorio2019multimodal,rusu2018meta,jerfel2019reconciling,cai2020weighted,yao2019hierarchically}.
\\

\paragraph{Conditional meta-learning}
Assume now that when sampling a task $\task$, we are also given additional side information $s \in \Ss$ to help solving the task. Within this setting the environment corresponds to a distribution $\rho \in \P(\Tt,\Ss)$ over the set $\Tt$ of tasks and the set $\Ss$ of possible side information.
The notion of side information is general, and recovers settings where $s$ contains descriptive features associated to a task (e.g. attributes in collaborative filtering \cite{abernethy2009new}) or $s$ is an additional dataset sampled from $\task$ (see \cite{wang2020structured} or \cref{double_sample_size} below).
Intuitively, meta-learning might solve a new task better if it was able to leverage 
this additional side information. We formalize this concept by adapting (or conditioning) 
the meta-parameters $\theta \in \Theta$ on the side information $s \in \Ss$, by learning 
a meta-parameter-valued function $\tau$ minimizing
\begin{equation} \label{conditional_meta_learning_problem_1}
\min_{\tau \in \T} ~ \ee_\env(\tau)
\quad \quad \quad
\ee_\env(\tau) = \Exp_{(\task, s) \sim \env} ~ \Exp_{\Zn \sim \task^n} 
~ \cR_\task \bigl( A(\tau(s), \Zn) \bigr),
\end{equation}
over the space $\T$ of measurable functions $\tau: \Ss \to \Theta$. Note that the unconditional meta-learning problem in \cref{meta_learning_problem_1} is retrieved
by restricting \cref{conditional_meta_learning_problem_1} to $\T^{\rm const} = \{\tau ~|~ \tau(\cdot) \equiv \theta,~ \theta\in\Theta\}$, the set of constant functions associating any side information to a fixed bias vector. We assume $\env$ to decompose in $\env(\cdot|s)\env_\Ss(\cdot)$ and $\env(\cdot|\mu)\env_\Tt(\cdot)$ the conditional and marginal distributions 
w.r.t. (with respect to) $\Ss$ and $\Tt$. 
In the following, we will quantify the benefits of adopting the conditional perspective 
above and, then, we propose an efficient algorithm to address \cref{conditional_meta_learning_problem_1}. We conclude this section by drawing a 
connection between our formulation and previous work on the topic.

\begin{remark}[Datasets as side information] \label{double_sample_size}
A relevant setting is the case where the side information $s$ corresponds to
an additional ({\itshape conditional}) dataset $\Zn^{cond}$ sampled from $\mu$, 
as proposed in \cite{wang2020structured}. We note however that our sampling 
scheme in \cref{conditional_meta_learning_problem_1} implies that side 
information $s$ and training set $\Zn$ are independent conditioned on $\task$. 
Hence, our framework does not allow having $s = \Zn^{cond} = \Zn$, namely, 
to use the same dataset for both conditioning and training the inner algorithm $A(\tau(\Zn),\Zn)$, as done in \cite{wang2020structured}.
This is a minor issue since one can always split $\Zn$ in two parts and use one 
part for training and the other one for conditioning.
\end{remark}


\section{The advantage of conditional meta-learning}
\label{unconditional_for_bias_advantage}

In this section we study the generalization properties of a given
conditional function $\tau$. This will allow us to characterize the behavior 
of the ideal solution of  \cref{conditional_meta_learning_problem_1} and to 
illustrate the potential advantage of conditional meta-learning. Specifically, 
we wish to estimate the error $\ee_\env(\tau)$ w.r.t. the ideal risk
\begin{equation} \label{oracle}
\ee_\env^* = \Exp_{\task \sim \rho} ~ \cR_\task (w_\task)
\quad \quad \quad 
w_\task = \argmin_{w \in \Real^d} ~ \cR_\task (w).
\end{equation}
For any $\tau \in \T$ the following quantity will play a central role 
in our analysis:
\begin{equation} \label{conditional_var}
{\rm Var}_\env(\tau)^2 = \Exp_{(\task, s) \sim \env} ~ 
\big \| w_\task - \tau(s) \big \|^2.
\end{equation}
With some abuse of terminology, we refer to ${\rm Var}_\env(\tau)$ as the {\itshape variance} of $w_\task$ w.r.t. $\tau$ (it corresponds to the actual variance of $w_\task$ when $\tau$ is the minimizer, see \cref{oracle_function} below).
Under the following assumption, we can control the excess risk of $\tau$ in terms of 
${\rm Var}_\env(\tau)$.

\begin{assumption}\label{ass_1}
Let $\ell$ be a convex and $L$-Lipschitz loss function in the first argument. Additionally, there exist $\rx>0$ such that $\| x \| \le \rx$ for any $x\in\X$. 
\end{assumption}

\begin{theorem}[Excess risk with generic conditioning function $\tau$] \label{bound_fixed_bias}
Let \cref{ass_1} hold. Given $\tau \in \T$, let $A(\theta,\cdot)$ be the generic inner algorithm in \cref{RERM_bias} with regularization parameter $\la = 2 L \rx {\rm Var}_\env(\tau)^{-1} 
n^{- 1/2}$. Then,
\begin{equation}
\ee_\env(\tau) - \ee_\env^* ~ \le ~ \frac{2 \rx L ~ {\rm Var}_\env(\tau)}{n^{1/2}}.
\end{equation}
\end{theorem}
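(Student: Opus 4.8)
The plan is to decouple the meta-level statement into a per-task bound on biased regularized ERM and then optimize over $\la$. The central estimate I would establish first is the following per-task inequality: for \emph{any} fixed task $\task\in\P(\Z)$ and \emph{any} fixed bias $\theta\in\R^d$, under \cref{ass_1} the algorithm $A(\theta,\cdot)$ of \cref{RERM_bias} satisfies
\[
\Exp_{\Zn\sim\task^n}\,\cR_\task\bigl(A(\theta,\Zn)\bigr) - \cR_\task(w_\task)
~\le~ \frac{\la}{2}\,\bigl\|w_\task - \theta\bigr\|^2 + \frac{2L^2\rx^2}{\la n}.
\]
Everything else is bookkeeping around this inequality, which is the one underpinning the single-task analysis of \cite{denevi2019learning}; I would either reproduce its short proof or cite it.

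To prove the per-task inequality I would combine two ingredients. (i) \emph{Optimality}: since $A(\theta,\Zn)$ minimizes the $\la$-strongly convex objective $\cR_\Zn^\la = \cR_\Zn + \tfrac\la2\|\cdot-\theta\|^2$, comparing its value at $A(\theta,\Zn)$ with its value at $w_\task$ gives $\cR_\Zn\bigl(A(\theta,\Zn)\bigr) \le \cR_\Zn(w_\task) + \tfrac\la2\|w_\task-\theta\|^2 - \tfrac\la2\|A(\theta,\Zn)-\theta\|^2$. (ii) \emph{On-average stability}: because $\cR_\Zn^\la$ is $\la$-strongly convex and each map $w\mapsto\ell(\langle x_i,w\rangle,y_i)$ is $L\rx$-Lipschitz under \cref{ass_1}, replacing one of the $n$ samples perturbs the objective by a function whose gradient has norm at most $2L\rx/n$, so the two resulting minimizers differ by at most $2L\rx/(\la n)$, whence the per-example loss at the minimizer changes by at most $2L^2\rx^2/(\la n)$; the standard equivalence between replace-one on-average stability and the expected generalization gap then yields $\Exp_{\Zn}\bigl[\cR_\task(A(\theta,\Zn)) - \cR_\Zn(A(\theta,\Zn))\bigr] \le 2L^2\rx^2/(\la n)$. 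Taking $\Exp_{\Zn}$ in (i), using $\Exp_{\Zn}\cR_\Zn(w_\task) = \cR_\task(w_\task)$, discarding the nonpositive term $-\tfrac\la2\|A(\theta,\Zn)-\theta\|^2$, and adding the bound from (ii) gives the displayed per-task inequality.

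Next I would lift this to the environment. Evaluating the per-task inequality at $\theta = \tau(s)$ and taking $\Exp_{(\task,s)\sim\env}$ (legitimate since $\tau$ is measurable and, the theorem being non-vacuous, ${\rm Var}_\env(\tau)<\infty$ so the right-hand side is integrable), the left-hand side is exactly $\ee_\env(\tau)-\ee_\env^*$ by \cref{conditional_meta_learning_problem_1} and \cref{oracle}, while $\Exp_{(\task,s)}\|w_\task-\tau(s)\|^2={\rm Var}_\env(\tau)^2$ by \cref{conditional_var}. Thus $\ee_\env(\tau)-\ee_\env^* \le \tfrac\la2\,{\rm Var}_\env(\tau)^2 + 2L^2\rx^2/(\la n)$. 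Plugging in $\la = 2L\rx\,{\rm Var}_\env(\tau)^{-1}n^{-1/2}$ — the minimizer of the convex map $\la\mapsto \tfrac\la2 V^2 + 2L^2\rx^2/(\la n)$ with $V={\rm Var}_\env(\tau)$ — balances the two terms at $L\rx V n^{-1/2}$ each, giving the claimed $2\rx L\,{\rm Var}_\env(\tau)\,n^{-1/2}$.

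I expect the main obstacle to be ingredient (ii): pinning down the stability constants, i.e. justifying the displacement bound $\|A(\theta,\Zn)-A(\theta,\Zn^{(i)})\|\le 2L\rx/(\la n)$ from strong convexity plus Lipschitzness of the per-example loss (being careful that swapping one of $n$ points changes the objective's gradient by at most $2L\rx/n$), and invoking the on-average-stability/generalization-gap identity \emph{in expectation} rather than a high-probability uniform-stability bound, which would cost extra logarithmic factors. A minor separate point is the degenerate case ${\rm Var}_\env(\tau)=0$, for which the prescribed $\la$ is undefined: there $\tau(s)=w_\task$ holds $\env$-a.s., the claimed bound is $0$, and it follows by letting $\la\to\infty$ in the integrated inequality (the stability term vanishes and $A(\tau(s),\Zn)\to\tau(s)=w_\task$).
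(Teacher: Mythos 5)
Your proof is correct and follows essentially the same route as the paper: it splits the excess risk into a generalization-gap term controlled by stability of the $\la$-strongly-convex regularized ERM (paper's term $\mathrm{B}_{\task,s}$ and Prop.~A.1) and an approximation term $\frac{\la}{2}\|w_\task-\tau(s)\|^2$ obtained from the optimality of $A(\theta,\Zn)$ compared against $w_\task$ (paper's term $\mathrm{C}_{\task,s}$), then takes $\Exp_{(\task,s)\sim\env}$ and optimizes $\la$. The only cosmetic difference is that you state the per-task bound first and lift it afterward, whereas the paper takes the environment expectation at the outset; the discussion of the degenerate case ${\rm Var}_\env(\tau)=0$ is a sensible extra remark the paper omits.
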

\begin{proof}
We consider the decomposition
$\ee_\env(\tau) - \ee_\env^* = \Exp_{(\task, s) \sim \env} 
\big[ \text{B}_{\task,s} + \text{C}_{\task,s} \big]$, with
\begin{equation}
\text{B}_{\task,s} = 
\Exp_{\Zn \sim \task^n} ~ 
\Big [ \cR_\task(A(\tau(s), \Zn))  - \cR_\Zn(A(\tau(s), \Zn)) \Big ]
\end{equation}
\begin{equation} \label{approximation_error}
\text{C}_{\task,s} = 
\Exp_{\Zn \sim \task^n} ~ \Big [ \cR_\Zn(A(\tau(s), \Zn)) - \cR_\task (w_\task) \Big] \le 
\Exp_{\Zn \sim \task^n} ~ \Big [ \min_{w \in \Real^d} 
~ \cR_\Zn^\la(w) - \cR_\task (w_\task) \Big].
\end{equation}
$\text{B}_{\task,s}$ is the generalization error of the inner algorithm 
$A(\tau(s),\cdot)$ on the task $\mu$. 
Hence, applying \cref{ass_1} and the stability arguments in 
\cref{generalization_error_RERM} in \cref{generalization_error_RERM_sec}, 
we can write 
$\text{B}_{\task,s} \le 2 \rx^2 L^2 (\la n)^{-1}$.
Regarding the term $\text{C}_{\task,s}$, exploiting the definition of the 
algorithm in \cref{RERM_bias}, we can write
$\text{C}_{\task,s}
\le \frac{\la}{2} ~ \| w_\task - \tau(s) \|^2$.
The desired statement follows by combining the two bounds 
above and optimizing w.r.t. $\la$.
\end{proof}
\cref{bound_fixed_bias} suggests that a conditioning function $\tau$ with low variance can potentially incur a small excess risk. 
This makes the minimizer of the variance, a potentially good candidate for conditional meta-learning. We note that ${\rm Var}_\env(\tau)$ in \cref{oracle} can be interpreted as a Least-Squares risk associated to the input-(ideal) output pair $(s,w_\task)$.  
Thanks to this interpretation, we can rely on the following well-known facts, 
see e.g. \cite[Lemma A$2$]{ciliberto2020general}. 
\begin{restatable}[Best conditioning function in hindsight]{lemma}{OracleFunction} \label{oracle_function}
The minimizer of ${\rm Var}_\rho(\cdot)^2$ in \cref{oracle} over the set $\T$ is such that $\tau_\env(s) = \Exp_{\task\sim\rho(\cdot|s)} ~ w_\mu$ almost everywhere on $\Ss$. Moreover, for any $\tau \in \T$,
\begin{equation} \label{gap_closed_form}
{\rm Var}_\env(\tau)^2 - {\rm Var}_\env(\tau_\env)^2
= \Exp_{s \sim \ms} ~ \big \| 
\tau(s) - \tau_\env(s) \big \|^2.
\end{equation}
\end{restatable}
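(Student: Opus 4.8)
The statement is the standard characterization of the conditional expectation as the minimizer of the mean-squared error, transported to the vector-valued regression of $w_\task$ on $s$; the plan is to prove the closed-form identity \cref{gap_closed_form} first, since both assertions follow from it at once. The key algebraic step is a bias--variance decomposition: insert $\pm\,\tau_\env(s)$ inside the squared norm defining ${\rm Var}_\env(\tau)^2$ in \cref{conditional_var} and expand.

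Concretely, for an arbitrary $\tau \in \T$ I would write
\begin{equation}
{\rm Var}_\env(\tau)^2 = \Exp_{(\task,s)\sim\env}\nor{\big(w_\task-\tau_\env(s)\big)+\big(\tau_\env(s)-\tau(s)\big)}^2
\end{equation}
and expand into three pieces. The first is $\Exp_{(\task,s)\sim\env}\nor{w_\task-\tau_\env(s)}^2 = {\rm Var}_\env(\tau_\env)^2$. The third has an integrand depending on $s$ only, so marginalizing $\task$ out turns it into $\Exp_{s\sim\ms}\nor{\tau(s)-\tau_\env(s)}^2$. For the cross term I would use the disintegration $\env=\env(\cdot|s)\,\ms$ to condition on $s$ and pull the $s$-measurable factor $\tau_\env(s)-\tau(s)$ outside the inner expectation, obtaining
\begin{equation}
\Exp_{(\task,s)\sim\env}\scal{w_\task-\tau_\env(s)}{\tau_\env(s)-\tau(s)} = \Exp_{s\sim\ms}\scal{\Exp_{\task\sim\env(\cdot|s)}w_\task-\tau_\env(s)}{\tau_\env(s)-\tau(s)},
\end{equation}
which vanishes because $\tau_\env(s) = \Exp_{\task\sim\env(\cdot|s)}w_\task$ by definition. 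Summing the three pieces yields \cref{gap_closed_form}. Since its right-hand side is nonnegative and equals $0$ exactly when $\tau=\tau_\env$ holds $\ms$-almost everywhere, $\tau_\env$ is an ($\ms$-essentially unique) minimizer of $\tau\mapsto{\rm Var}_\env(\tau)^2$ over $\T$, which is the first assertion.

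There is no genuine obstacle here, only some measure-theoretic bookkeeping: one must check that $\tau_\env$, defined pointwise as a conditional expectation, admits a measurable version so that it is an admissible competitor in $\T$, and that $w_\task\in L^2$ under $\env$ so that all the expectations above are finite and the interchange of the inner product with the conditional expectation is legitimate. Both are standard facts about the orthogonal projection onto the $s$-measurable maps in $L^2(\Ss;\env)$, and I would simply invoke them as in \cite[Lemma A$2$]{ciliberto2020general}; the integrability is mild since, under \cref{ass_1}, $w_\task$ need only be square-integrable on the support of $\env$.
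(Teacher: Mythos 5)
Your proof is correct and is exactly the standard bias--variance decomposition (insert $\pm\,\tau_\env(s)$, kill the cross term by conditioning on $s$), which is precisely the argument behind the fact the paper invokes by citation to \cite[Lemma A2]{ciliberto2020general} rather than proving inline. The measure-theoretic caveats you flag (existence of a measurable version of $\tau_\env$ and $w_\task\in L^2$ under $\env$) are the right ones and are handled the same way there.
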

Combining \cref{bound_fixed_bias} with \cref{oracle_function},
we can formally analyze when the conditional approach is significantly 
advantageous w.r.t. the unconditional one. \\

\paragraph{Conditional vs unconditional meta-learning}
As observed in \cref{conditional_meta_learning_problem_1}, unconditional meta-learning 
consists in restricting to the class of constant conditioning functions $\T^{\rm const}$. 
Minimizing ${\rm Var}_\rho(\cdot)^2$ over this class yields the optimal bias vector for 
standard meta-learning (see e.g. \cite{denevi2019learning, denevi2019online,balcan2019provable,khodak2019adaptive}),
given by the expected target tasks' vector
$w_\env = \Exp_{\task \sim \mt} ~ w_\task$.
Applying \cref{gap_closed_form} to the constant function
$\tau \equiv w_\env$, we get the following gap between the best 
performance of conditional and unconditional meta-learning:
\begin{equation} \label{gap_conditional_unconditional}
{\rm Var}_\env(w_\env)^2 - {\rm Var}_\env(\tau_\env)^2
~=~ \Exp_{s \sim \ms} ~ \nor{ ~ 
 w_\env - \tau_\env(s) ~ }^2.
\end{equation}
We note that the gap \cref{gap_conditional_unconditional} above is 
large when the ideal conditioning function $\tau_\env$ is ``far'' from 
being the constant function $w_\env$. We report below two 
examples that can be considered illustrative for many real-world scenarios 
in which such a condition is satisfied. We refer to \cref{examples} for the 
details and the deduction. 
In the examples, we parametrize each task with the triplet $\task=(w_\task,\eta_\task,\xi_\task)$,
where $w_\task$ is the target weight vector, $\eta_\task$ is the marginal distribution on the inputs, $\xi_\task$ is a noise model and $y\sim\mu(\cdot|x)$ is $y = \scal{w_\mu}{x} + \epsilon$ with $x\sim\eta_\task$ and $\epsilon\sim\xi_\task$. Additionally, we denote by $\mathcal{N}(v,\sigma^2 I)$ a Gaussian distribution with mean $v\in\R^d$ and covariance matrix $\sigma^2 I$, with $I$ the $d \times d$ identity matrix. 
\begin{example}[Clusters of tasks] \label{clusters_ex}
Let $\env_\Tt = \frac{1}{m} \sum_{i=1}^m \env_\Tt^{(m)}$ be a uniform mixture of $m$ environments (clusters) of tasks. For each $i=1,\dots,m$, a task $\task \sim\env_\Tt^{(i)}$ is sampled such that: $1)$ $w_\mu\sim\mathcal{N}(w(i),\sigma_w^2I)$ with $w(i) \in\R^d$ a cluster's mean vector and $\sigma_w^2I$ a covariance matrix, $2)$ the marginal $\eta_\task = \mathcal{N}(x(i),\sigma_\X^2)$ with mean vector $x(i)\in\R^d$ and variance $\sigma_\X^2$, $3)$ the side information is an $n$ i.i.d. sample from $\eta_\task$, namely $s = (x_i)_{i=1}^n \sim \eta_\task^n$. Then, the gap between conditional and unconditional variance is 
\begin{equation}\label{eq:lower-bound-clusters}
{\rm Var}_\env(w_\env)^2 - {\rm Var}_\env(\tau_\env)^2 
~\geq~ \frac{1}{2 m ^2} \sum_{i,j = 1}^m 
\Bigg(1 - \frac{m}{2} ~ e^{- \frac{n}{\sigma_\X^2}
\nor{x(i) - x(j)}^2} \Bigg) \nor{w(i)-w(j)}^2.
\end{equation}
\end{example}
The inequality above confirms our natural intuition. It tells us 
that the larger is the number of clusters and the more the target 
weight vectors' and inputs' centroids are distant (i.e. the more the 
clusters are distant and the inputs' side information are discriminative 
for conditioning), the more the conditional approach will be 
advantageous w.r.t. the unconditional one.

\begin{example}[Curve of tasks] \label{curve_ex}
Let $\env_\Ss$ be a uniform distribution over $\Ss = [0,1]$. Let $h:\Ss\to\R^d$ parametrize a circle of radius $r>0$ centered in $c\in\R^d$, such as $h(s) = r~(\cos(2\pi s),\sin(2\pi s),0,\dots,0)^\top$. For $s\in\Ss$, let $\task\sim\env(\cdot|s)$ such that $w_\task\sim\mathcal{N}(h(s),\sigma^2I)$ with $\sigma \in \Real$. Then, $\tau_\rho = h$, $w_\rho = c$ and the the gap between conditional and unconditional variance is 
\begin{equation}\label{eq:gap-circle}
{\rm Var}_\env(w_\env)^2 - {\rm Var}_\env(\tau_\env)^2 ~=~ r^2.
\end{equation}
\end{example}
Hence, in this case, the advantage in applying the conditional approach 
w.r.t. the unconditional one is equivalent to the squared radius of the circle 
over which the mean of the target weight vectors $w_\task$ lie. \\

\paragraph{Conditional meta-learning vs Independent Task Learning (ITL)}
Solving each task independently corresponds to choosing
the constant conditioning function $\tau_0 \equiv 0$. 
Applying \cref{oracle_function} to this function, the gap between 
the performance of the best conditional approach and ITL reads as
\begin{equation} \label{cond_vs_ITL}
{\rm Var}_\env(0)^2 - {\rm Var}_\env(\tau_\env)^2 
~ = ~\Exp_{s \sim \ms} ~ \big \| w_\env - \tau_\env(s) \big \|^2 
~+~ \nor{w_\env}^2.
\end{equation}
The gap in \cref{cond_vs_ITL} combines the gain of conditional 
over unconditional meta-learning with $\nor{w_\env}^2 
= {\rm Var}_\env(0)^2 - {\rm Var}_\env(w_\env)^2$ that is 
the advantage of unconditional meta-learning over ITL (see  
\cite{denevi2019learning, denevi2019online}). 
In the next section, we introduce a convex meta-algorithm mimicking this 
advantage also in practice.


\section{Conditional meta-learning algorithm}
\label{proposed_method}

To address conditional meta-learning in practice, we introduce the following set of conditioning functions. For a given feature map $\Phi:\Ss\to\R^k$ on the side 
information space, we define the associated space of linear functions
\begin{equation}\label{eq:feature-space-T}
\T_\Phi = \Big \{ \tau: \Ss \to \Real^d ~ \big | ~ \tau(\cdot) 
= M \Phi(\cdot) + b, \text{ for some } M \in \Real^{d \times k}, b \in \Real^d \Big \}.
\end{equation}
To highlight the dependency of a function $\tau \in \T_\Phi$ w.r.t. its parameters $M$ 
and $b$, we will use the notation $\tau= \tau_{M,b}$. 
Evidently, $\T_\Phi$ contains the space of all unconditional estimators 
$\T^{\rm const}$. We consider $\T_\Phi$ equipped with the 
canonical norm $\nor{\tau_{M,b}}^2 = \nor{(M,b)}_F^2 = \nor{M}_F^2 + \nor{b}^2$, with $\| \cdot \|_F$ the Frobenius norm. We now introduce two 
standard assumptions will allow the design of our method.

\begin{assumption} \label{ass_2}
The minimizer $\tau_\rho$ of ${\rm Var}_\rho(\cdot)$ belongs to $\T_\Phi$, namely there exist $M_\rho\in\R^{d \times k}$ and $b_\rho\in\R^d$, such that $\tau_\rho(\cdot) = M_\rho\Phi(\cdot) + b_\rho$.
\end{assumption}

\begin{assumption} \label{ass_3}
There exists $K>0$ such that $\| \Phi(s) \| \le K$ for any $s\in\Ss$. 
\end{assumption} 
\cref{ass_2} enables us to restrict the conditional meta-learning problem
in \cref{conditional_meta_learning_problem_1} to $\T_\Phi$, rather than 
to the entire space $\T$ of measurable functions. 
In \cref{oracle_meta_parameters} in \cref{proof_oracle_meta_parameters}
we provide the closed forms of $M_\env$ and $b_\env$ and we express the 
gap in \cref{gap_conditional_unconditional} by the correlation between 
$w_\task$ and $\Phi(s)$ and the slope of $\tau_\env$.
\cref{ass_3} will allow us to work with a Lipschitz meta-objective,
as explained below. \\

\paragraph{The convex surrogate problem}
Following a similar strategy to the one adopted for the unconditional setting 
in \cite{denevi2019learning, denevi2019online}, we introduce the 
following surrogate problem for the conditional one in \cref{conditional_meta_learning_problem_1}:
\begin{equation} \label{surrogate}
\min_{\tau \in \T} ~ \hat \ee_\env(\tau)
\quad \quad \quad 
\hat \ee_\env(\tau) ~=~ \Exp_{(\task, s) \sim \env} ~ \Exp_{\Zn \sim \task^n}
~~ \cR_\Zn^\la(A(\tau(s),Z)),
\end{equation}
where we have replaced the inner expected risk $\cR_\task$ with the regularized empirical risk $\cR_Z^\la$ in \cref{RERM_bias}. 
Exploiting \cref{ass_2}, the problem above can be rewritten more explicitly as follows
\begin{equation} \label{surrogate_linear}
\min_{M\in \Real^{d \times k}, b \in \Real^d} ~ \Exp_{(\task, s) \sim \env}
~ \Exp_{\Zn \sim \task^n} ~ \LL\big (M, b, s, \Zn \big )
\qquad \LL\big (M, b, s, \Zn \big ) = \cR_Z^\la(A(\tau_{M,b}(s),Z) ).
\end{equation}
The following proposition characterizes useful properties of the meta-loss 
$\LL\big (\cdot, \cdot, s, \Zn \big )$ introduced above (such as convexity 
and differentiability) and it supports its choice as surrogate meta-loss. We 
denote by $\cdot \trans$ the standard transposition operation. 

\begin{restatable}[Properties of the surrogate meta-loss $\LL$]{proposition}{PropertiesSurrogate} 
\label{properties_surrogate}
For any $\Zn \in \D$ and $s \in \Ss$, the function 
$\LL\big (\cdot, \cdot, s, \Zn \big )$ 
is convex, differentiable and its gradient is given by
\begin{equation} \label{gradient1}
\nabla \LL\big (\cdot, \cdot, s, \Zn \big )(M, b) =
- \la \Big( A \big(\tau_{M, b}(s), \Zn \big) - 
\tau_{M, b}(s) \Big) \left(\begin{array}{c}\Phi(s)\\ 1\end{array}\right) \trans
\end{equation}
for any $M \in\R^{d\times k}$ and $b \in \Real^d$. 
Moreover, under \cref{ass_1} and \cref{ass_3}, we have
\begin{equation} \label{bound_grad_norm}
\nor{\nabla \LL\big (\cdot, \cdot, s, \Zn \big )(M, b)}_F^2
\le L^2 \rx^2 (K^2 + 1).
\end{equation}
\end{restatable}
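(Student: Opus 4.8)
The plan is to identify $\LL(\cdot,\cdot,s,\Zn)$ as the composition of an affine map with (a rescaling of) the Moreau envelope of the empirical risk $\cR_\Zn$, and then read off all three claims from standard properties of Moreau envelopes. Fix $s\in\Ss$ and $\Zn\in\D$, and define $g:\R^d\to\R$ by $g(\theta)=\cR_\Zn^\la(A(\theta,\Zn))=\min_{w\in\R^d}\bigl[\cR_\Zn(w)+\tfrac{\la}{2}\nor{w-\theta}^2\bigr]$, so that $\LL(M,b,s,\Zn)=g(\tau_{M,b}(s))$ with $\tau_{M,b}(s)=M\Phi(s)+b$ affine in $(M,b)$. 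Note first that the minimizer $A(\theta,\Zn)$ exists and is unique, since $\cR_\Zn^\la(\cdot)$ is $\la$-strongly convex. The integrand $(w,\theta)\mapsto\cR_\Zn(w)+\tfrac{\la}{2}\nor{w-\theta}^2$ is jointly convex (a convex function of $w$ plus a jointly convex quadratic), so its partial minimization $g$ over $w$ is convex; precomposing with the affine map $(M,b)\mapsto\tau_{M,b}(s)$ preserves convexity, which is the first claim.

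For differentiability and the gradient formula, observe that $g$ is the Moreau envelope of the proper, continuous, convex function $\cR_\Zn$, whose associated proximal operator is $\prox_{\la^{-1}\cR_\Zn}(\theta)=A(\theta,\Zn)$; a standard property of Moreau envelopes then gives that $g$ is continuously differentiable with $\nabla g(\theta)=\la\bigl(\theta-A(\theta,\Zn)\bigr)$. Applying the chain rule to $\LL(M,b,s,\Zn)=g(M\Phi(s)+b)$, and using that the adjoint of the linear map $(M,b)\mapsto M\Phi(s)+b$ sends $v\in\R^d$ to the pair $(v\Phi(s)\trans,v)$, i.e.\ to $v\,(\Phi(s)\trans,1)$ written as a $d\times(k+1)$ matrix, yields exactly \cref{gradient1}.

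For the bound in \cref{bound_grad_norm}, from \cref{gradient1} we have $\nor{\nabla\LL(M,b)}_F^2=\la^2\,\nor{A(\tau_{M,b}(s),\Zn)-\tau_{M,b}(s)}^2\bigl(\nor{\Phi(s)}^2+1\bigr)$, and the second factor is at most $K^2+1$ by \cref{ass_3}. For the first factor, the first-order optimality condition for $A(\theta,\Zn)=\argmin_w\cR_\Zn^\la(w)$ reads $\la\bigl(\theta-A(\theta,\Zn)\bigr)\in\partial\cR_\Zn(A(\theta,\Zn))$; since $\cR_\Zn(w)=\tfrac1n\sum_i\ell(\scal{x_i}{w},y_i)$, the sum and chain rules for subdifferentials together with \cref{ass_1} ($\ell$ is $L$-Lipschitz in its first argument and $\nor{x_i}\le\rx$) show that every element of $\partial\cR_\Zn(w)$ has norm at most $\tfrac1n\sum_iL\rx=L\rx$, so $\nor{A(\theta,\Zn)-\theta}\le L\rx/\la$. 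Substituting gives $\nor{\nabla\LL(M,b)}_F^2\le\la^2(L\rx/\la)^2(K^2+1)=L^2\rx^2(K^2+1)$.

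The only nonroutine point is the differentiability of $g$ together with the formula for $\nabla g$; everything else is bookkeeping. This is classical for Moreau envelopes, but if a self-contained argument is preferred, the cleanest route is to use that $\cR_\Zn^\la$ is $\la$-strongly convex in $w$, so $A(\cdot,\Zn)$ is single-valued and $1$-Lipschitz, and then verify directly that $g(\theta')-g(\theta)=\la\scal{\theta-A(\theta,\Zn)}{\theta'-\theta}+o(\nor{\theta'-\theta})$ by sandwiching $g(\theta')-g(\theta)$ between the two inequalities obtained from inserting $A(\theta,\Zn)$ and $A(\theta',\Zn)$ as feasible (suboptimal) points into the minimization problems defining $g(\theta)$ and $g(\theta')$.
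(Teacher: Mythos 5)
Your proposal is correct and follows essentially the same route as the paper: identify $\LL(\cdot,\cdot,s,\Zn)$ as the Moreau envelope $g(\theta)=\min_w\cR_\Zn^\la(w)$ precomposed with the affine map $(M,b)\mapsto M\Phi(s)+b$, apply the standard gradient formula $\nabla g(\theta)=\la(\theta-A(\theta,\Zn))$ via the chain rule, and combine $\nor{vu\trans}_F^2=\nor{v}^2\nor{u}^2$ with \cref{ass_3}. The only difference is that where the paper cites \cite[Prop.~4]{denevi2019learning} for the bound $\nor{\nabla g(\theta)}\le L\rx$, you rederive it directly from the first-order optimality condition $\la(\theta-A(\theta,\Zn))\in\partial\cR_\Zn(A(\theta,\Zn))$ together with the Lipschitz/boundedness assumptions, which makes that step self-contained but is otherwise the same argument.
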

The proof of \cref{properties_surrogate} is reported in 
\cref{properties_surrogate_proof} and it follows a similar 
reasoning in \cite{denevi2019online}, by taking into account 
also the parameter $M$ in the optimization problem. \\

\paragraph{The conditional meta-learning estimator} 
In this work we propose to apply  Stochastic Gradient Descent (SGD) on the surrogate problem in \cref{surrogate_linear}. \cref{OGDA2_paper} summarizes the implementation of this approach: assuming a sequence of i.i.d. pairs $(\Zn_t, s_t)_{t = 1}^T$ of training sets and side information, at each iteration the algorithm updates the conditional iterates $(M_t,b_t)$ by performing a step of constant size $\gamma>0$ in the direction of $-\nabla\LL(\cdot,\cdot,s_t,\Zn_t)(M_t,b_t)$. The map $\tau_{\thickbar M, \thickbar b}$ is then returned as conditional estimator, with $(\thickbar M,\thickbar b)$ the average across all the iterates $(M_t,b_t)_{t = 1}^T$. The following result characterizes the excess risk of the proposed estimator. 

\makeatletter
\algrenewcommand\ALG@beginalgorithmic{\footnotesize}
\makeatother
\begin{algorithm}[t]
\caption{Meta-Algorithm, SGD on \cref{surrogate_linear}}\label{OGDA2_paper}
\begin{algorithmic}
\State ~
   \State {\bfseries Input} ~ $\gamma > 0$ meta-step size, $\la > 0$ inner regularization parameter
   \vspace{.075cm}
   \State {\bfseries Initialization} ~ $M_1 = 0 \in \Real^{d \times k}$, $b_1 = 0 \in \Real^d$
  \vspace{.075cm}
   \State {\bfseries For} ~ $t=1$ to $T$
   \vspace{.075cm}
   \State \qquad ~ Receive ~~ $(\task_t, s_t) \sim \env$ and $\Zn_t \sim \task_t^n$
   \vspace{.075cm}
   \State \qquad ~ Let ~~ $\theta_t ~=~ \tau_{M_t, b_t}(s_t) ~=~ M_t \Phi(s_t) + b_t$
   \vspace{.075cm}
   \State \qquad ~ Run the inner algorithm in \cref{RERM_bias} to obtain $w_t ~=~ A(\theta_t, \Zn_t)$
   \vspace{.05cm}
   \State \qquad ~ Compute $\nabla \LL (\cdot, \cdot, s_t, \Zn_t  )(M_t, b_t) = -\lambda (w_t - \theta_t) \Big(\begin{array}{c}\Phi(s_t)\\ 1\end{array}\Big) \trans$ as in \cref{gradient1}
   \vspace{.05cm}
   \State \qquad ~ Update ~~ $(M_{t+1}, b_{t+1}) = (M_t, b_t) - \gamma \nabla \LL (\cdot, \cdot, s_t, \Zn_t)(M_t, b_t)$
   \vspace{.075cm}
 \State {{\bfseries Return} ~ $\displaystyle {\thickbar M} = \frac{1}{T} \sum_{t=1}^T M_t$, $\displaystyle \thickbar b = \frac{1}{T} \sum_{t=1}^T b_t$} 
\State ~
\end{algorithmic}
\end{algorithm}

\begin{restatable}[Excess risk bound for the conditioning function 
returned by \cref{OGDA2_paper}]{theorem}{BoundEstimatedBias}\label{bound_estimated_bias}
Let \cref{ass_1} and \cref{ass_3} hold. 
Let $\tau_{M,b}$ be a fixed function in $\T_\Phi$ and let
${\rm Var}_\env(\tau_{M,b})^2$ be the corresponding variance 
introduced in \cref{conditional_var}. Let $\thickbar M$ and 
$\thickbar b$ be the outputs of \cref{OGDA2_paper} applied to a 
sequence $(\Zn_t, s_t)_{t = 1}^T$ of i.i.d. pairs sampled from 
$\env$ with inner regularization parameter and meta-step size
\begin{equation}
\la ~=~ \frac{2 \rx L}{{\rm Var}_\env(\tau_{M,b})}~ \frac{1}{\sqrt{n}}
\quad \quad \quad 
\gamma ~=~ \frac{\nor{(M,b)}_F}{L \rx \sqrt{(K^2 + 1)}} ~ \frac{1}{\sqrt{T}}.
\end{equation}
Then, in expectation w.r.t. the sampling of $(\Zn_t, s_t)_{t = 1}^T$, 
\begin{equation} \label{conditional_bound_bias}
\Exp ~ \ee_\env(\tau_{\thickbar M,\thickbar b}) - \ee_\env^*
~\leq~ \frac{2 \rx L {\rm Var}_\env(\tau_{M,b})}{\sqrt{n}} 
+ \frac{L \rx \sqrt{K^2 + 1} \nor{(M,b)}_F}{\sqrt{T}}.
\end{equation}
\end{restatable}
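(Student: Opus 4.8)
The plan is to route the excess risk through the convex surrogate objective $\hat\ee_\env$ of \cref{surrogate}, so that the within-task error is handled by the stability argument already used in \cref{bound_fixed_bias}, and the meta-level error is handled by a standard online-to-batch conversion for SGD on the convex meta-loss $\LL$. Write $\bar\tau = \tau_{\thickbar M,\thickbar b}$ for the output of \cref{OGDA2_paper} and let $\tau_{M,b}\in\T_\Phi$ be the fixed comparator in the statement. I would start from the decomposition
\begin{equation*}
\ee_\env(\bar\tau) - \ee_\env^*
= \underbrace{\big(\ee_\env(\bar\tau) - \hat\ee_\env(\bar\tau)\big)}_{(\mathrm I)}
+ \underbrace{\big(\hat\ee_\env(\bar\tau) - \hat\ee_\env(\tau_{M,b})\big)}_{(\mathrm{II})}
+ \underbrace{\big(\hat\ee_\env(\tau_{M,b}) - \ee_\env^*\big)}_{(\mathrm{III})}.
\end{equation*}

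Term $(\mathrm I)$ is bounded \emph{uniformly in $\tau$}: since $\cR_\Zn^\la \ge \cR_\Zn$, one has $\ee_\env(\tau) - \hat\ee_\env(\tau) \le \Exp_{(\task,s)\sim\env}\Exp_{\Zn\sim\task^n}\big[\cR_\task(A(\tau(s),\Zn)) - \cR_\Zn(A(\tau(s),\Zn))\big]$, which is precisely $\Exp[\mathrm B_{\task,s}]$ in the proof of \cref{bound_fixed_bias}; by \cref{ass_1} and the stability bound of \cref{generalization_error_RERM} this is at most $2\rx^2 L^2(\la n)^{-1}$. Since this inequality is deterministic in $\tau$, it applies in particular to the data-dependent function $\bar\tau$ once we take expectations. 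Term $(\mathrm{III})$ is the comparator analogue of the term $\mathrm C_{\task,s}$: using $A(\tau_{M,b}(s),\Zn) = \argmin_w \cR_\Zn^\la(w)$, plugging in $w = w_\task$, and using $\Exp_{\Zn}\cR_\Zn(w_\task) = \cR_\task(w_\task)$ gives $\hat\ee_\env(\tau_{M,b}) - \ee_\env^* \le \tfrac{\la}{2}\,{\rm Var}_\env(\tau_{M,b})^2$.

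The core of the argument is term $(\mathrm{II})$. By \cref{properties_surrogate}, $\LL(\cdot,\cdot,s,\Zn)$ is convex with gradient of Frobenius norm at most $L\rx\sqrt{K^2+1}$ under \cref{ass_1} and \cref{ass_3}; hence $F(M,b):=\hat\ee_\env(\tau_{M,b})$ is convex, and since the pairs $(\Zn_t,s_t)$ are i.i.d.\ from $\env$ and $(M_t,b_t)$ is measurable w.r.t.\ the past, $g_t := \nabla\LL(\cdot,\cdot,s_t,\Zn_t)(M_t,b_t)$ is a conditionally unbiased estimator of $\nabla F(M_t,b_t)$. The standard unconstrained online (sub)gradient descent bound from the initialization $(M_1,b_1)=0$ yields, for the fixed comparator $(M,b)$,
\begin{equation*}
\frac1T\sum_{t=1}^T \big\langle g_t,\ (M_t,b_t) - (M,b)\big\rangle \ \le\ \frac{\nor{(M,b)}_F^2}{2\gamma T} + \frac{\gamma\, L^2\rx^2(K^2+1)}{2}.
\end{equation*}
Taking expectations, bounding the left side below by $\tfrac1T\sum_t\big(F(M_t,b_t) - F(M,b)\big)$ via convexity of each $\LL(\cdot,\cdot,s_t,\Zn_t)$, and applying Jensen's inequality $F(\thickbar M,\thickbar b)\le \tfrac1T\sum_t F(M_t,b_t)$, we get $\Exp\,\hat\ee_\env(\bar\tau) - \hat\ee_\env(\tau_{M,b}) \le \tfrac{\nor{(M,b)}_F^2}{2\gamma T} + \tfrac{\gamma L^2\rx^2(K^2+1)}{2}$. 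Summing the three bounds gives
\begin{equation*}
\Exp\,\ee_\env(\bar\tau) - \ee_\env^* \ \le\ \frac{2\rx^2 L^2}{\la n} + \frac{\la}{2}\,{\rm Var}_\env(\tau_{M,b})^2 + \frac{\nor{(M,b)}_F^2}{2\gamma T} + \frac{\gamma\, L^2\rx^2(K^2+1)}{2},
\end{equation*}
and the claim follows because the stated choices of $\la$ and $\gamma$ are exactly the minimizers of the $\la$-dependent pair and of the $\gamma$-dependent pair, each pair evaluating to twice the corresponding square-root term.

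\textbf{Main obstacle.} The difficulty is conceptual rather than computational: one must pass through $\hat\ee_\env$ rather than $\ee_\env$, since SGD only controls $\hat\ee_\env(\bar\tau)$ against $\hat\ee_\env$ evaluated at a \emph{fixed} comparator — comparing $\ee_\env(\bar\tau)$ directly to $\ee_\env(\tau_{M,b})$ would instead require controlling the data-dependent quantity ${\rm Var}_\env(\bar\tau)$. The two points that require care are that the bound for $(\mathrm I)$ is genuinely pointwise in $\tau$ (so it remains valid after substituting the random $\bar\tau$), and that $g_t$ is conditionally unbiased, which is exactly where the i.i.d.\ sampling of $(\Zn_t,s_t)$ and the assumed conditional independence of $s$ and $\Zn$ given $\task$ are used; everything else is the routine bookkeeping of nested expectations.
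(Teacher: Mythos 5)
Your proposal matches the paper's proof essentially step by step: the same three-term decomposition through the surrogate $\hat\ee_\env$ (your $(\mathrm I),(\mathrm{II}),(\mathrm{III})$ are the paper's $\mathrm B,\mathrm C,\mathrm D$), the same stability bound from \cref{generalization_error_RERM} for the first term, the same SGD/online-to-batch analysis (the paper's \cref{convergence_surrogate}) for the second, the same use of $A$ being the $\cR_\Zn^\la$-minimizer for the third, and the same optimization over $\la$ and $\gamma$; your checks on why $(\mathrm I)$ is pointwise in $\tau$ and why $g_t$ is conditionally unbiased correctly supply the details the paper's sketch leaves implicit.
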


\begin{proof}[Proof (Sketch)]
We consider the following decomposition
\begin{equation} \label{decomposition_bias}
\Exp ~ \ee_\env(\tau_{\thickbar M, \thickbar b}) - \ee_\env^* =
\underbrace{\Exp ~ \ee_\env(\tau_{\thickbar M, \thickbar b}) - 
\hat \ee_\env(\tau_{\thickbar M, \thickbar b})}_{\text{B}}
+ \underbrace{\Exp ~ \hat \ee_\env(\tau_{\thickbar M, \thickbar b}) 
- \hat \ee_\env(\tau_{M, b})}_{\text{C}}
+ \underbrace{\hat \ee_\env(\tau_{M, b}) - \ee_\env^*}_{\text{D}}.
\end{equation}
Applying \cref{ass_1} and the stability arguments in 
\cref{generalization_error_RERM} in \cref{generalization_error_RERM_sec}, 
we can write $\text{B} \le 2 \rx^2 L^2 (\la n)^{-1}$.
The term C is the term expressing the convergence rate of 
\cref{OGDA2_paper} on the surrogate problem in \cref{surrogate_linear} 
and, exploiting \cref{ass_3} and \cref{properties_surrogate}, it can be 
controlled as described in \cref{convergence_surrogate} in 
\cref{proof_conv_rate_surr}. 
Regarding the term D, exploiting the definition of the 
algorithm in \cref{RERM_bias}, we can write $\text{D}
\le \frac{\la}{2} ~ {\rm Var}_\env(\tau_{M,b})^2$.
Combining all the terms and optimizing w.r.t. $\gamma$ and $\la$, 
we get the desired statement.
\end{proof}
We now comment about the result we got above in \cref{bound_estimated_bias}. \\

\paragraph{Proposed vs optimal conditioning function}
Specializing the bound in \cref{bound_estimated_bias}
to the best conditioning function $\tau_\env$ in 
\cref{oracle_function}, thanks to \cref{ass_2}, 
we get the following bound for our estimator:
\begin{equation}
\Exp ~ \ee_\env(\tau_{\thickbar M,\thickbar b}) - \ee_\env^* ~\leq~
\mathcal{O} \Big( {\rm Var}_\env(\tau_\env) ~ n^{-1/2} 
+ \|(M_\env,b_\env) \|_F ~ T^{-1/2}\Big).
\end{equation}
Hence, our proposed meta-algorithm achieves comparable 
performance to the best conditioning function $\tau_\env$ in hindsight, 
provided that the number of observed tasks is sufficiently large.
The bound above also highlights the trade-off between statistical 
and computational complexity of the class $\T_\phi$: conditional meta-learning 
incurs in a cost $\|(M_\env,b_\env) \|_F$ in the $\sqrt{T}$-term that is larger 
than the $\nor{b_\env}$ cost of unconditional meta-learning
(see \cite{denevi2019learning,balcan2019provable,khodak2019adaptive}), 
which is, however, limited 
to constant conditioning functions. This is an acceptable price, 
since, as we discussed in \cref{unconditional_for_bias_advantage}, 
the performance of conditional meta-learning is significantly 
better than the standard one in many common scenarios.
\begin{remark}
When $\tau_\env \notin \T_\Phi$ (i.e. when \cref{ass_3} does not hold), 
our method suffers an additional approximation error due to the fact 
$\min_{\tau \in \T_\Phi} ~ {\rm Var}_\env(\tau) > {\rm Var}_\env(\tau_\env)$.
In this case, one might nullify the gap above by considering a feature 
map $\Phi:\Ss\to\hh$ with $\hh$ a universal reproducing kernel Hilbert 
space of functions. Exploiting standard arguments from online learning 
with kernels literature (see e.g. \cite{kivinen2004online,singh2012online,shalev2014understanding}),
in \cref{lemma_only_kernel} in \cref{implementation_kernels} we 
describe the implementation of \cref{OGDA2_paper} for this setting 
using only evaluations of the kernel associated to the feature map.
We leave the corresponding theoretical analysis to future work.
\end{remark}

\paragraph{Proposed conditioning function vs unconditional meta-learning}
Specializing \cref{bound_estimated_bias}
to $\tau_{M,b} \equiv w_\env$, the bound for our estimator becomes:
\begin{equation} \label{unconditional_retrieved}
\Exp ~ \ee_\env(\tau_{\thickbar M,\thickbar b}) - \ee_\env^* ~\leq~
\mathcal{O} \big( {\rm Var}_\env(w_\env) ~ n^{-1/2}
+ \| w_\env \| ~ T^{-1/2}\big),
\end{equation}
which is equivalent to state-of-the-art bounds for 
unconditional methods, see \cite{denevi2019learning, denevi2019online,balcan2019provable,khodak2019adaptive}.
Hence, our conditional approach provides, at least,
the same guarantees as its unconditional counterpart. \\

\paragraph{Proposed conditioning function vs ITL}
Specializing \cref{bound_estimated_bias} to 
$\tau_{M,b} \equiv 0$ corresponds to force $\gamma = 0$ and, 
consequently, \cref{OGDA2_paper} to not move. In such 
a case, we get the bound:
\begin{equation} \label{comparison_ITL}
\Exp ~ \ee_\env(\tau_{\thickbar M,\thickbar b}) - \ee_\env^* ~\leq~ 
\mathcal{O} \big( {\rm Var}_\env(0) ~ n^{-1/2} \big),
\end{equation}
which corresponds to the standard excess risk bound for ITL, 
see \cite{denevi2019learning, denevi2019online,balcan2019provable,khodak2019adaptive}.
In other words, our method does not generate negative transfer effect.
\begin{remark}[Fine-tuning] \label{approx_meta_sub}
In the case of the online inner family in \cref{online_algorithm_remark}
used in fine-tuning, \cref{OGDA2_paper} employs an approximation of the 
meta-subgradient in \cref{gradient1} by replacing the batch regularized 
empirical risk minimizer $A (\tau_{M, b}(s), \Zn )$ in \cref{RERM_bias} 
with the last iterate of the online algorithm in \cref{online_inner_algorithm}.
As shown in \cite{denevi2019learning, denevi2019online}
for the unconditional setting, such an approximation does not affect the 
behavior of the bounds above. 
\end{remark}


\section{Experiments}
\label{experiments}

In this section we compare the numerical performance 
of our conditional method in \cref{OGDA2_paper} (cond.) w.r.t. its unconditional 
counterpart in \cite{denevi2019learning} (uncond.). We will also add to the 
comparison the methods consisting in applying the inner algorithm on each 
task with $\tau \equiv 0 \in \Real^d$ (i.e. ITL) and the unconditional 
oracle $\tau \equiv w_\env = \Exp_{\task \sim \mt} ~ w_\task$ (mean), 
when available. We considered regression problems and we evaluated the errors 
by the absolute loss. The results refer to the fine-tuning 
variant of the methods with the online inner algorithm in \cref{online_inner_algorithm}. 
For all the experiments below (except the synthetic circle),
we used as side information collections of datapoints 
(see \cref{double_sample_size}). \\

\paragraph{Synthetic clusters.} We considered three variants of the 
setting described in \cref{clusters_ex}. In all the variants we sampled 
$T_{\rm tot} = 480$ tasks from a mixture of $m$ clusters with the same 
probability. For each task $\task$, we sampled the corresponding target 
vector $w_\task$ from the $d = 20$-dimensional Gaussian distribution 
$\mathcal{N}(w(j_\task),I)$, where, $j_\task \in \{1, \dots, m\}$ denotes 
the cluster from which the task $\task$ was sampled. 
We then generated the corresponding dataset $(x_i,y_i)_{i=1}^{n_{\rm tot}}$ 
with $n_{\rm tot} = 20$. We sampled the inputs from 
$\mathcal{N}(x(j_\task), I)$ and we generated the labels according 
to the equation $y = \langle x, w_\task \rangle + \epsilon$, with the noise 
$\epsilon$ sampled from $\mathcal{N}(0, \sigma^2 I)$, with $\sigma$
chosen in order to have signal-to-noise ratio equal to $1$. \\
In \cref{fig_exps} (left-top), we generated an environment as above with 
just one cluster ($m = 1$) and we took $w(1) = 4 \in \Real^d$ (the vector in 
$\Real^d$ with all components $4$) and $x(1) = 1 \in \Real^d$. As we can 
see, coherently with previous work \cite{denevi2019learning}, the uncoditional 
approach outperforms ITL and it converges to the mean vector $w_\env = w(1)$ 
as the number of training tasks increases. The conditional approach returns 
equivalent performances to the unconditional counterpart. \\
In \cref{fig_exps} (right-top), we considered an environment of two clusters 
($m = 2$) identified by $w(1) = 8 \in\Real^d$, $w(2) = 0 \in \Real^d$ 
(implying $w_\env = 4$), $x(1) = 1 \in \Real^d$ and $x(2) = - x(1)$. 
As we can see, the conditional approach outperform ITL 
as in the previous setting, but the conditional approach yields even 
better performance. \\
Finally, in \cref{fig_exps} (left-bottom), we considered 
an environment of two clusters ($m = 2$) identified by $w(1) = 4 \in \Real^d$, 
$w(2) = - w(1)$ (implying $w_\env = 0$), $x(1) = 1 \in \Real^d$ and 
$x(2) = - x(1)$. As expected, the uncoditional approach 
mimics the poor performance of ITL, while, the performance of the 
conditional approach is promising. \\
Summarizing, the conditional approach 
brings advantage w.r.t. the unconditional one when the heterogeneity 
of the environment is significant. When the environment is homogeneous, 
the performance of the two are equivalent. This conclusion is exactly inline
with our theory in \cref{unconditional_retrieved} and \cref{comparison_ITL}. \\

\begin{figure}[t]
\begin{minipage}[t]{0.49\textwidth}  
\centering
\includegraphics[width=1\textwidth]{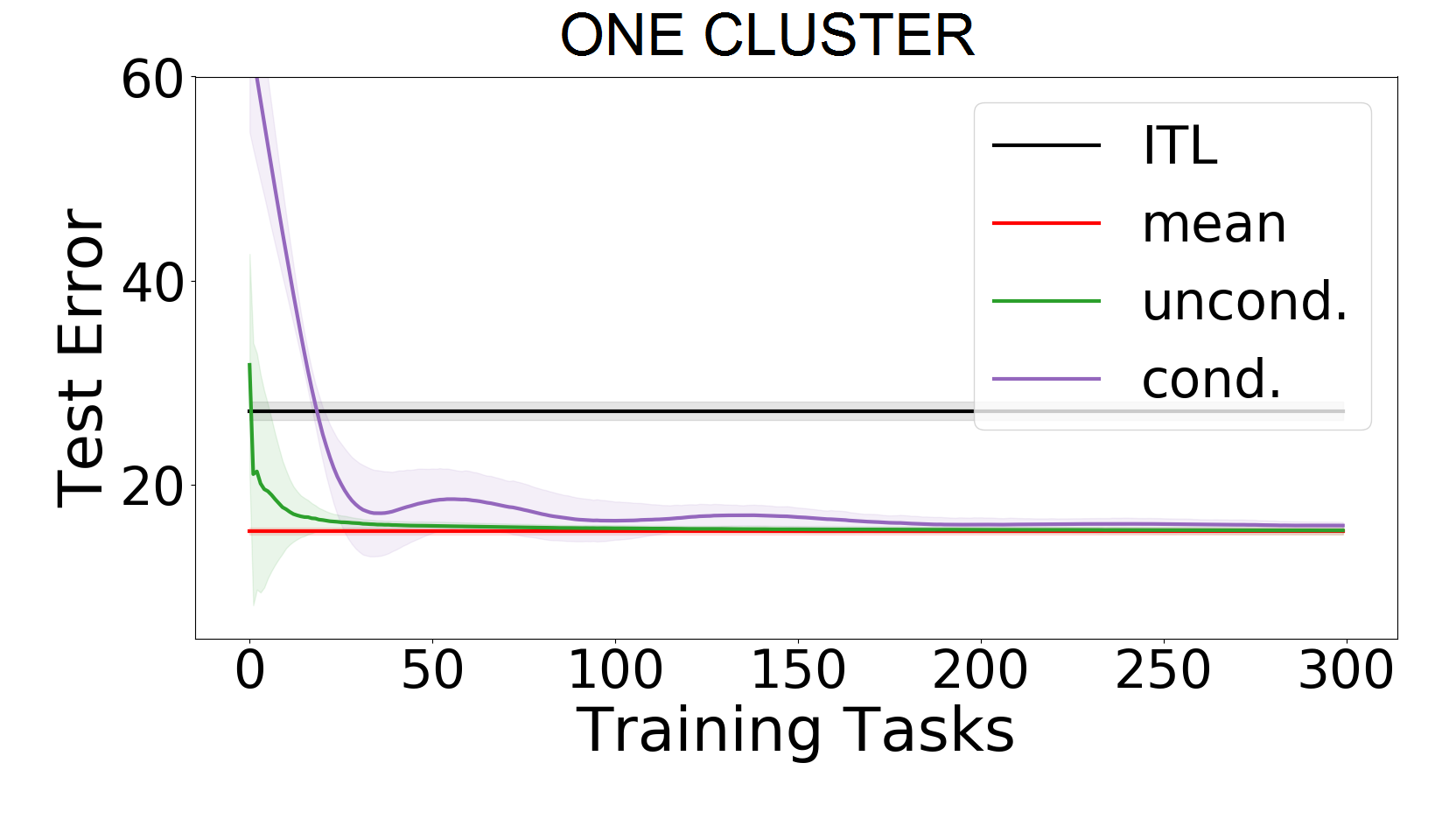} \\ 
\includegraphics[width=1\textwidth]{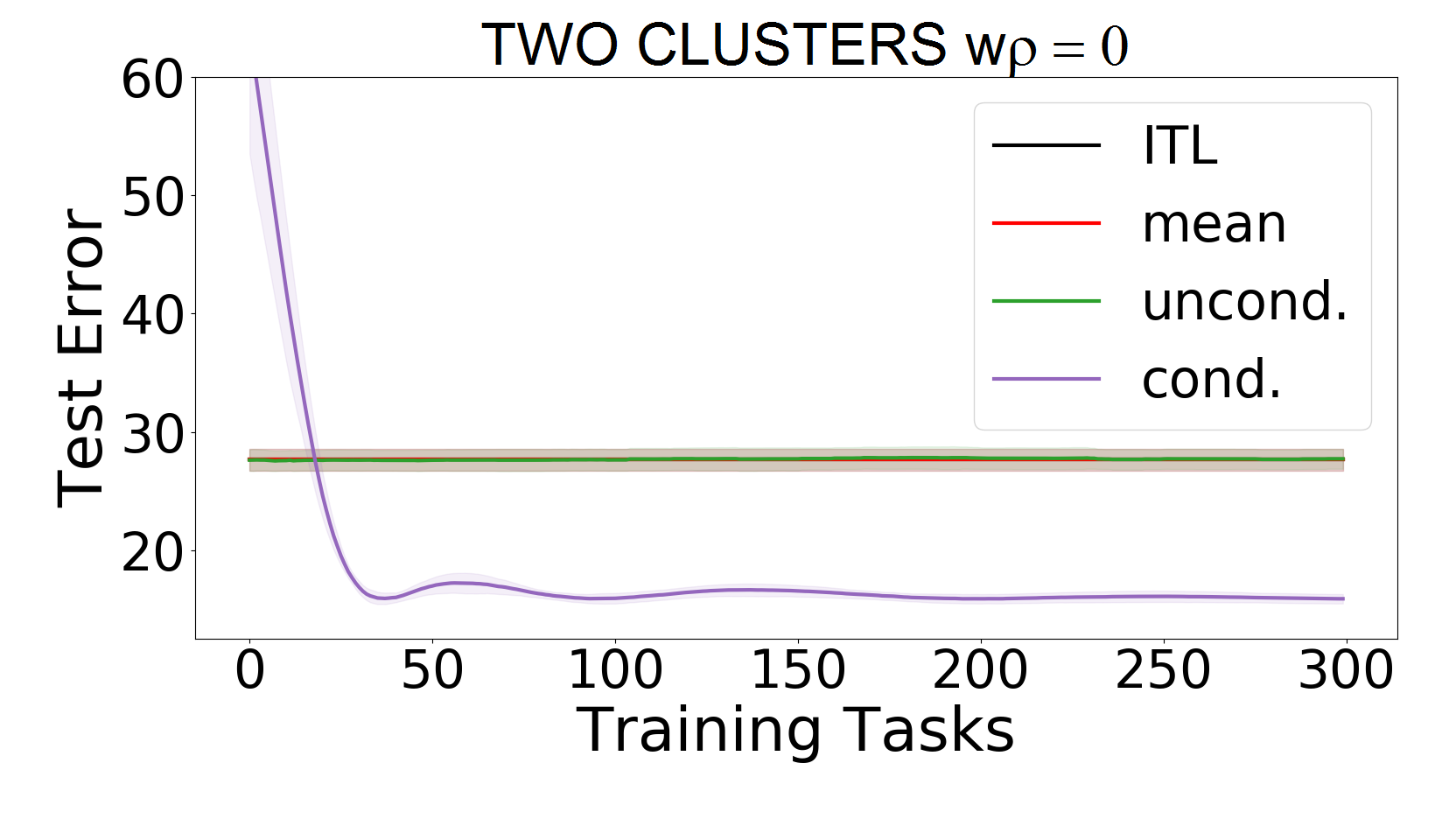}
\end{minipage}
\begin{minipage}[t]{0.49\textwidth}
\centering
\includegraphics[width=1\textwidth]{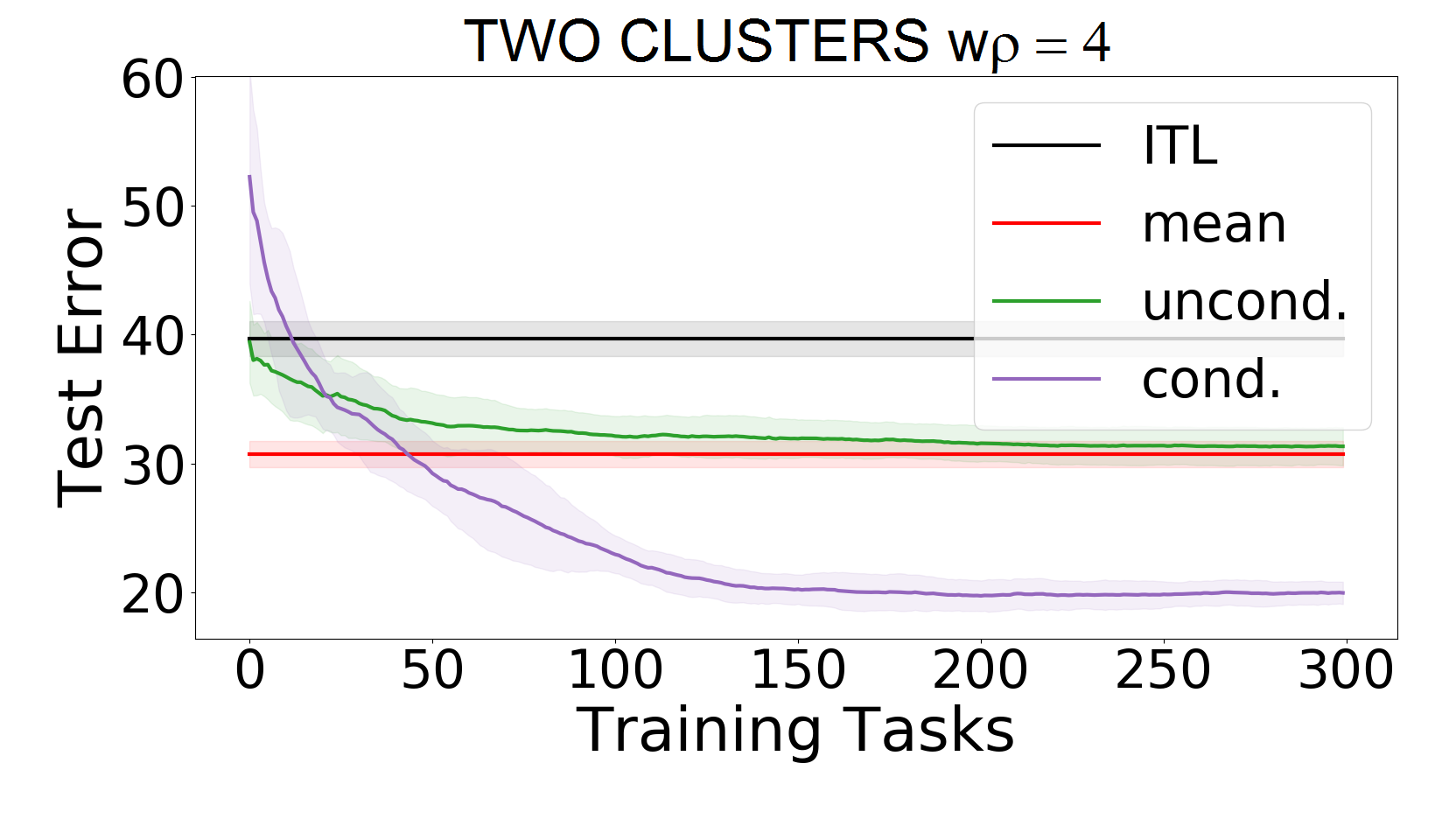} \\
\includegraphics[width=1\textwidth]{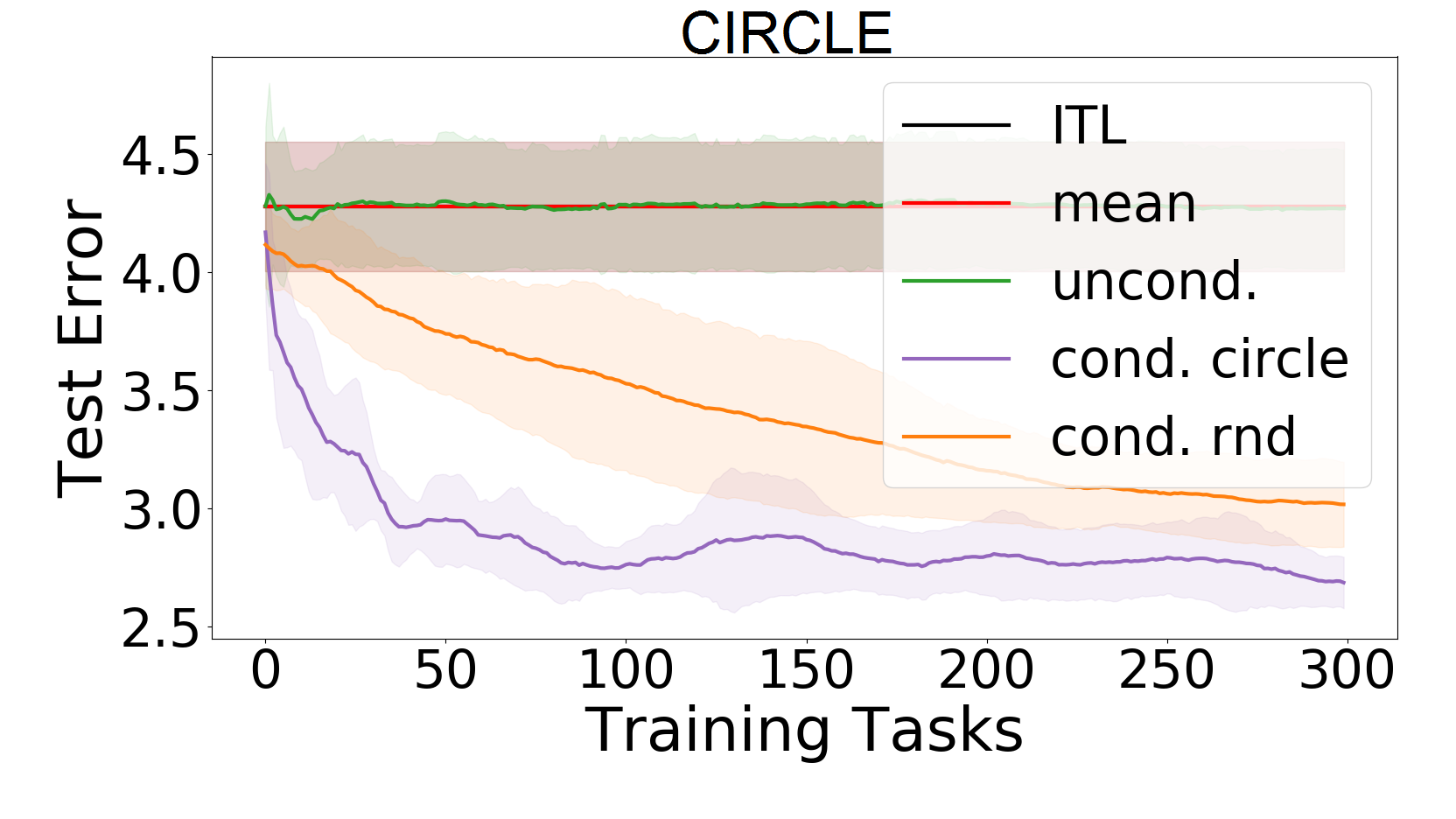} \\ 
\end{minipage}
\caption{Performance (averaged over $10$ seeds) 
of different methods w.r.t. an increasing number of tasks
on different environments: with one cluster
(left-top), with two clusters and $w_\env = 4$ 
(right-top), with two clusters and $w_\env = 0$ 
(left-bottom), circle (right-bottom).
\label{fig_exps}}
\vspace{-.3cm}
\end{figure}

\paragraph{Synthetic circle.} We sampled $T_{\rm tot} = 480$ tasks
according to the setting described in \cref{curve_ex}. Specifically, 
for each task $\task$, we first sampled the corresponding side 
information $s \in [0,1]$ according to the uniform distribution. 
We then generated the vector 
\begin{equation}
h(s) = r~(\cos(2\pi s),\sin(2\pi s),
0,\dots,0)^\top \in \Real^d,
\end{equation}
with $d = 20$, on the zero-centered 
circle of radius $r = 8$. After this, we sampled the 
corresponding target weight vector $w_\task$ from $\mathcal{N}(h(s), I)$.
We then generated the associated dataset of $n_{\rm tot} = 20$ points 
as for the experiments above. We applied our conditional approach with 
the true underlying feature map $\Phi(s) = 
({\rm cos}(2 \pi s), {\rm sin}(2 \pi s))$ (cond. circle) and a feature map 
mimicking a Gaussian distribution by Fourier random features
\cite{rahimi2008random} (cond. rnd). \\
From \cref{fig_exps} (right-bottom) we see that the performance of 
unconditional meta-learning mimics the poor performance of ITL 
(in fact, we have $w_\env = 0$). On the other hand, both the conditional 
approaches bring a substantial advantage and the random features'
variant approaches the variant knowing the true underlying feature map.

Because of lack of space, in \cref{exp_details}, we report two additional 
experiments showing the effectiveness of our conditional approach on
two real datasets (the Lenk \cite{lenk1996hierarchical,Andrew} and the 
Schools \cite{argyriou2008convex} datasets). We also describe 
the implementation details we omit here, such as the feature map 
$\Phi$ and the hyper-parameters $\la, \gamma$ we used.


\section{Conclusion}
\label{conclusion}

We proposed a new conditional meta-learning framework for biased regularization
and fine-tuning based on side information and we provided a theoretical analysis 
demonstrating its potential advantage over standard meta-learning, when the 
environment of tasks is heterogeneous. In the future, taking inspiration from 
\cite{orabona2016coin,cutkosky2018black}, it would be interesting to develop 
a variant of our method in which the hyper-parameters are automatically tuned 
in efficient way. In addition, it would valuable to extend our conditional approach 
and the corresponding analysis to other meta-learning paradigms considering different 
families of inner algorithms, such as \cite{tripuraneni2020provable,denevi2019online}.


\section*{Acknowledgments}

This work was supported in part by SAP SE and EPSRC Grant N. EP/P009069/1. C.C. acknowledges the Royal Society (grant SPREM RGS\textbackslash R1\textbackslash 201149). 



{\small
\bibliographystyle{abbrv}
\bibliography{references}
}



\newpage

\appendix

\section*{Appendix}

The supplementary material is organized as follows. 
In \cref{generalization_error_RERM_sec} we give the bound on the 
generalization error of the algorithm in \cref{RERM_bias} that 
we used in various proofs. In \cref{examples} we formally describe 
the deduction of the statements reported in \cref{clusters_ex}
and \cref{curve_ex} in \cref{unconditional_for_bias_advantage}.
In \cref{proof_oracle_meta_parameters} we report the closed 
form of $M_\env \in \Real^{d \times k}$ and $b_\env \in \Real^d$
in \cref{ass_2} and we express the gap between the conditional 
and the uncoditional variance in \cref{gap_conditional_unconditional} 
by the correlation between the target tasks' vectors $w_\task$ and 
the transformed side information $\Phi(s)$ or the slope of $\tau_\env$.
In \cref{proofs_proposed_method_sec}, 
we report the proofs of the statements we used in \cref{proposed_method} 
in order to prove the expected excess risk bound in \cref{bound_estimated_bias}
for \cref{OGDA2_paper}. Finally, in \cref{exp_details}, we report
two additional real experiments and the implementation details we omitted in
the main body, because of lack of space.


\section{Generalization error of the algorithm in \cref{RERM_bias}}
\label{generalization_error_RERM_sec}

In this section we report the generalization error bound of the family
of inner algorithms in \cref{RERM_bias} that we used in our proofs. The
statement exploits standard tools from stability theory. We do not claim
any originality, we report the proof for completeness.

\begin{restatable}[Generalization error of the algorithm in \cref{RERM_bias}]{proposition}{generalizationErrorRERM} 
\label{generalization_error_RERM}
For a distribution $\task \sim \env$, fix a dataset $\Zn = (x_i,y_i)_{i = 1}^n \sim \task^n$ and, for any $i \in \{1, \dots, n \}$, fix a datapoint $z_i' = (x_i', y_i') \sim \task$ independent from $\Zn$. For any $\theta \in \Theta$ not depending on $\Zn$, let $\hat w_\theta(\Zn) = A(\theta, \Zn)$ be the output of the algorithm in \cref{RERM_bias} over $\Zn$ and let $s_{\theta,i}' \in \partial \ell ( \cdot , y_i' ) (  \langle x_i', \hat w_\theta(\Zn) \rangle)$ be a subgradient of $\ell ( \cdot , y_i' )$ at $\langle x_i', \hat w_\theta(\Zn) \rangle$. Then, the following generalization error bound holds for $\hat w_\theta(\Zn)$
\begin{equation}
\Exp_{\Zn \sim \task^n} ~ \big[ \cR_\task ( \hat w_\theta(\Zn)) - \cR_{\Zn} ( \hat w_\theta(\Zn)) \big] \le \frac{2}{\la n} ~ \Exp_{\Zn \sim \task^n} ~ \Exp_{z_i' \sim \task} ~ \big \| x_i' s_{\theta,i}' \big \|^2.
\end{equation}
As a consequence, under \cref{ass_1}, the right side term 
above can be upper bounded by $2 L^2 \rx^2 (\la n)^{-1}$.
\end{restatable}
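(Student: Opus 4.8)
The plan is to run the classical algorithmic‑stability argument for biased regularized empirical risk minimization. First I would introduce, for each index $i$, the ``replace‑one'' dataset $\Zn^i$ obtained from $\Zn$ by swapping $z_i=(x_i,y_i)$ for the independent copy $z_i'=(x_i',y_i')$, and set $w=A(\theta,\Zn)$, $w^i=A(\theta,\Zn^i)$. Using the i.i.d.\ sampling and the permutation symmetry of $A(\theta,\cdot)$, the standard ghost‑sample identity rewrites the expected generalization gap as
\[
\Exp_{\Zn}\big[\cR_\task(w)-\cR_\Zn(w)\big]
=\frac1n\sum_{i=1}^n \Exp_{\Zn,z_i'}\big[\ell(\langle x_i',w\rangle,y_i')-\ell(\langle x_i',w^i\rangle,y_i')\big].
\]
Then, by convexity of $\ell(\cdot,y_i')$ in its first argument, the $i$‑th summand is at most $\langle x_i's_{\theta,i}',\,w-w^i\rangle\le\|x_i's_{\theta,i}'\|\,\|w-w^i\|$, where $s_{\theta,i}'\in\partial\ell(\cdot,y_i')(\langle x_i',w\rangle)$ is precisely the subgradient named in the statement (evaluated at the solution on $\Zn$).

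Next I would prove the stability estimate $\|w-w^i\|\le(\|x_i's_{\theta,i}'\|+\|x_ia_i\|)/(\la n)$ for some $a_i\in\partial\ell(\cdot,y_i)(\langle x_i,w^i\rangle)$. This follows by writing the $\la$‑strong‑convexity inequality for $\cR_\Zn^\la$ anchored at its minimizer $w$ and for $\cR_{\Zn^i}^\la$ anchored at its minimizer $w^i$, adding the two so that the bias‑regularization terms cancel, noting that the two empirical objectives differ only in the single $i$‑th loss term, and bounding that difference again via the subgradient inequality. Substituting back into the identity above yields
\[
\Exp_{\Zn}\big[\cR_\task(w)-\cR_\Zn(w)\big]
\le\frac{1}{\la n^{2}}\sum_{i=1}^n \Exp\big[\|x_i's_{\theta,i}'\|^{2}+\|x_i's_{\theta,i}'\|\,\|x_ia_i\|\big].
\]

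The last step is to collapse the mixed term. The key observation is that $z_i$ is independent of $w^i$ (because $\Zn^i$ does not contain $z_i$), while $z_i'$ is independent of $w$; moreover $w$ and $w^i$ have the same law and $z_i,z_i'$ have the same law, so $\|x_ia_i\|$ and $\|x_i's_{\theta,i}'\|$ are identically distributed. Hence Cauchy–Schwarz gives $\Exp[\|x_i's_{\theta,i}'\|\,\|x_ia_i\|]\le\Exp\|x_i's_{\theta,i}'\|^{2}$, and since all $n$ terms are equal by symmetry we obtain the claimed bound $\tfrac{2}{\la n}\,\Exp\|x_i's_{\theta,i}'\|^{2}$. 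The stated consequence is then immediate: under \cref{ass_1}, $|s_{\theta,i}'|\le L$ and $\|x_i'\|\le\rx$ force $\|x_i's_{\theta,i}'\|\le L\rx$.

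The main obstacle I anticipate is bookkeeping rather than a deep step: keeping track of which minimizer ($w$ or $w^i$) each subgradient is evaluated at, so that the final expression lands on the specific $s_{\theta,i}'$ of the statement instead of a coarser quantity, and carefully justifying the exchangeability used to discard the mixed term. The ghost‑sample identity, the addition of the two strong‑convexity inequalities, and the Cauchy–Schwarz step are all routine.
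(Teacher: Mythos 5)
Your argument reproduces the paper's proof: replace-one stability via $\la$-strong convexity of $\cR_\Zn^\la$, the Bousquet--Elisseeff ghost-sample identity, the subgradient/Hölder bound, and collapsing the mixed term by exchangeability plus Cauchy--Schwarz. The only cosmetic difference is that you average the ghost-sample identity over $i$ while the paper fixes a single $i$ (equivalent by exchangeability), so the two proofs coincide in substance.
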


\begin{proof}
For any $i \in \{ 1, \dots, n \}$, consider the dataset $\Zn^{(i)}$, a copy of the
original dataset $\Zn$ in which we exchange the point $z_i = (x_i, y_i)$ with the new
i.i.d. point $z_i' = (x_i', y_i')$. For a fixed $\theta \in \Theta$, we analyze how much this
perturbation affects the outputs of the algorithm in \cref{RERM_bias}. In other words, we 
study the discrepancy between $\hat w_\theta(\Zn)$ and $\hat w_\theta(\Zn^{(i)})$. 
We start from observing that, since $\cR^\la_{\Zn}$ is $\la$-strongly convex w.r.t. 
$\| \cdot \|$, by growth condition and the definition of the algorithm in \cref{RERM_bias}, 
we can write the following
\begin{equation}
\begin{split}
& \frac{\la}{2} ~ \big \| \hat w_\theta(\Zn^{(i)}) - \hat w_\theta(\Zn) \big \|^2  \le 
\cR^\la_{\Zn} ( \hat w_\theta(\Zn^{(i)}) )  - \cR^\la_{\Zn} ( \hat w_\theta(\Zn) ) \\
& \frac{\la}{2} ~ \big \| \hat w_\theta(\Zn^{(i)}) - \hat w_\theta(\Zn) \big \|^2  \le 
\cR^\la_{\Zn^{(i)}} ( \hat w_\theta(\Zn) )  - \cR^\la_{\Zn^{(i)}} 
( \hat w_\theta(\Zn^{(i)}) ).
\end{split}
\end{equation}
Hence, summing the two inequalities above, we get
\begin{equation} \label{o}
\begin{split}
\la ~ \big \| \hat w_\theta(\Zn^{(i)}) - \hat w_\theta(\Zn) \big \|^2
& \le \cR^\la_{\Zn} ( \hat w_\theta(\Zn^{(i)}) )  
- \cR^\la_{\Zn^{(i)}} ( \hat w_\theta(\Zn^{(i)}) ) 
+ \cR^\la_{\Zn^{(i)}} ( \hat w_\theta(\Zn) ) - 
\cR^\la_{\Zn} ( \hat w_\theta(\Zn) ) \\
& = \frac{{\text B} + {\text C}}{n},
\end{split}
\end{equation}
where we have introduced the terms
\begin{equation}
\begin{split}
& \text{B} = \ell (  \langle x_i', \hat w_\theta(\Zn)  \rangle, y_i' ) - \ell (  \langle x_i', \hat w_\theta(\Zn^{(i)})  \rangle, y_i' ) \\
& \text{C} = \ell (  \langle x_i, \hat w_\theta(\Zn^{(i)})  \rangle, y_i ) - \ell (  \langle x_i, \hat w_\theta(\Zn)  \rangle, y_i ).
\end{split}
\end{equation}
Now, exploiting the assumption $s_{\theta,i}' \in \partial \ell ( \cdot , y_i' ) (  \langle x_i', \hat w_\theta(\Zn)  \rangle )$,
applying Holder's inequality and introducing a subgradient
$s_{\theta,i} \in \partial \ell ( \cdot , y_i ) (  \langle x_i, \hat w_\theta(\Zn^{(i)})  \rangle)$, we can write
\begin{equation} \label{oo}
\begin{split}
& \text B \le \big \langle x_i' s_{\theta,i}', \hat w_\theta(\Zn) - \hat w_\theta(\Zn^{(i)}) \big \rangle
\le \big \| x_i' s_{\theta,i}' \big \| ~ \big \| \hat w_\theta(\Zn^{(i)}) - \hat w_\theta(\Zn) \big \| \\
& \text C \le \big \langle x_i s_{\theta,i}, \hat w_\theta(\Zn^{(i)}) - \hat w_\theta(\Zn) \big \rangle
\le \big \| x_i s_{\theta,i} \big \| ~ \big \| \hat w_\theta(\Zn^{(i)}) - \hat w_\theta(\Zn) \big \|.
\end{split}
\end{equation}
Combining these last two inequalities with \cref{o} and simplifying, we get the following
\begin{equation} \label{oooo}
\big \| \hat w_\theta(\Zn^{(i)}) - \hat w_\theta(\Zn) \big \| \le \frac{1}{\la n} 
\Bigl( \big \| x_i' s_{\theta,i}' \big \| + \big \| x_i s_{\theta,i} \big \| \Bigr).
\end{equation}
Hence, combining the first row in \cref{oo} with \cref{oooo}, we can write
\begin{equation} \label{mela}
\begin{split}
\ell (  \langle x_i', \hat w_\theta(\Zn)  \rangle, y_i' ) - \ell ( \langle x_i', 
\hat w_\theta(\Zn^{(i)}) \rangle, y_i' )
\le \frac{1}{\la n} \Bigl( \big \| x_i' s_{\theta,i}' \big \|^2 
+ \big \| x_i' s_{\theta,i}' \big \| ~
\big \| x_i s_{\theta,i} \big \| \Bigr).
\end{split}
\end{equation}
Now, taking the expectation w.r.t. $\Zn \sim \task^n$ and $z_i' \sim \task$ of the left side member above, according to
\cite[Lemma $7$]{bousquet2002stability}, we get 
\begin{equation*}
\Exp_{\Zn \sim \task^n} ~ \Exp_{z_i' \sim \task} ~ \Big[ \ell (  \langle x_i', \hat w_\theta(\Zn)  \rangle, y_i' ) 
- \ell (  \langle x_i', \hat w_\theta(\Zn^{(i)})  \rangle, y_i' ) \Big] =
\Exp_{\Zn \sim \task^n} ~ \Big[ \cR_\task ( \hat w_\theta(\Zn) ) - \cR_{\Zn} ( \hat w_\theta(\Zn) ) \Big].
\end{equation*}
Finally, taking the expectation of the right side member, exploiting the fact that the points are i.i.d. according $\task$,
we get
\begin{equation}
\Exp_{\Zn \sim \task^n} ~ \Exp_{z_i' \sim \task} ~
\frac{1}{\la n} \Bigg ( \big \| x_i' s_{\theta,i}' \big \|^2 
+ \big \| x_i' s_{\theta,i}' \big \| \big \| x_i s_{\theta,i} \big \| 
\Bigg ) \le \frac{2}{\la n} ~ \Exp_{\Zn \sim \task^n} 
~ \Exp_{z_i' \sim \task} ~ \big \| x_i' s_{\theta,i}' \big \|^2,
\end{equation}
where we recall that $s_{\theta,i}' \in \partial \ell ( \cdot , y_i' ) 
(  \langle x_i', \hat w_\theta(\Zn)  \rangle )$.
The statement derives from combining the two last statements above with 
the expectation w.r.t. $\Zn \sim \task^n$ and $z_i' \sim \task$ of \cref{mela}.
The second statement directly derives from the first one, once one 
observes that, if $\ell(\cdot,y)$ is $L$-Lipschitz for any $y \in \Y$, then,
$| s_{\theta,i}' | \le L$ (see \cite[Lemma $14.7$]{shalev2014understanding}).
\end{proof}


\section{Examples}
\label{examples}

In this section, we provide the deduction of the statements in 
the examples reported in \cref{unconditional_for_bias_advantage}.
We start from presenting some computation regarding a generic 
environment parametrized by a latent variable in \cref{general_ex_deduction} 
and, then, in \cref{clusters_ex_deduction}, we specify this computation 
and we derive the statement in \cref{clusters_ex}. Finally, in 
\cref{curve_ex_deduction}, we prove the statement in \cref{curve_ex}.

\subsection{General parametrization}
\label{general_ex_deduction}

Consider the case where a latent variable $\alpha \in \A$ 
parametrizes the environment $\env$. Denote by $\env(\cdot|\alpha)$ 
the conditional distributions given $\alpha$ and by $\env_\A$ 
the marginal distribution of the latent variable. As usual, we assume
$\rho(\task, \alpha) = \rho(\task | \alpha) \rho_\A(\alpha)$.
Introduce also
\begin{equation}
w(\alpha) = \int w_\task~\rho(\task|\alpha)~dw_\task
\quad \quad \quad 
\sigma(\alpha)^2 = \frac{1}{2} \int \nor{w_\task - w(\alpha)}^2 
~\rho(\task|\alpha)~dw_\task
\end{equation}
the conditional expectation and the conditional variance of the target 
weight vectors $w_\task$ given $\alpha$, respectively. 
We now explicitly compute the unconditional and the 
conditional variance for this generic environment. \\

\paragraph{Unconditional variance}
We start from observing that thanks to the parametrization of the 
environment $\env$, we can rewrite the unconditional variance
as follows
\begin{equation} \label{unc_var_computation_gen}
\begin{split}
{\rm Var}_\env(w_\env)^2 & = 
\Exp_{\task \sim \mt} ~ \big \| w_\task - w_\env \big \|^2
= \int \nor{w_\task - w_\env}^2 ~\rho(\task)~dw_\task\\
& = \int \Bigg( \int \nor{w_\mu - w_\env}^2 ~\rho(\task | \alpha) 
~dw_\task \Bigg) \rho_\A(\alpha)~d\alpha.
\end{split}
\end{equation} 
We now observe that, for any $\alpha \in \A$, we can write 
the following
\begin{equation}
\begin{split}
\int \nor{w_\mu - w_\env}^2& ~\rho(\task | \alpha) ~dw_\task \\ 
& = \int \Big( \nor{w_\task}^2 - 2\scal{w_\task}{w_\env} + \nor{w_\env}^2 \Big)
~\rho(\task | \alpha)~dw_\task\\
& = \int \nor{w_\task}^2 ~\rho(\task | \alpha)~dw_\task - 2 \scal{w(\alpha)}{w_\env} 
+ \nor{w_\env}^2 \\
& = \int \nor{w_\task}^2~\rho(\task | \alpha)~dw_\task \pm \nor{w(\alpha)}^2
- 2\scal{w(\alpha)}{w_\env} + \nor{w_\env}^2\\
& = \int \big \| w_\task - w(\alpha) \big \|^2~\rho(\task | \alpha)~dw_\task 
+ \nor{w(\alpha) - w_\env}^2 \\
& = 2 \sigma(\alpha)^2 + \nor{w(\alpha) - w_\env}^2.
\end{split}
\end{equation} 
Hence, substituting in \cref{unc_var_computation_gen}, we get
\begin{equation} \label{uncond_var_closed_form_gen_bis}
{\rm Var}_\env(w_\env)^2 = 2 \int  \sigma(\alpha)^2~\rho_\A(\alpha)~d\alpha 
+ \int \nor{w(\alpha) - w_\env}^2~\rho_\A(\alpha)~d\alpha.
\end{equation} 
We now observe that the second term above can be rewritten as follows
\begin{equation}
\begin{split}
\int \nor{w(\alpha) - w_\env}&^2~\rho_\A(\alpha)~d\alpha \\
& = \int \nor{w(\alpha)}^2\rho_\A(\alpha)~d\alpha - \nor{w_\env}^2\\
& = \int \nor{w(\alpha)}^2\rho_\A(\alpha)~d\alpha - \nor{\int w(\alpha')\rho_\A(\alpha')~d\alpha'}^2\\
& = \int \nor{w(\alpha)}^2\rho_\A(\alpha)~d\alpha - \int \scal{w(\alpha)}{w(\alpha')}\rho_\A(\alpha)\rho_\A(\alpha')~d\alpha~d\alpha'\\
& = \int \Big(\nor{w(\alpha)}^2 -  \scal{w(\alpha)}{w(\alpha')}\Big)\rho_\A(\alpha)\rho_\A(\alpha')~d\alpha~d\alpha'.
\end{split}
\end{equation}
But, since
\begin{equation}
\int \nor{w(\alpha)}^2\rho_\A(\alpha)\rho_\A(\alpha')~d\alpha 
= \frac{1}{2}\int \big( \nor{w(\alpha)}^2 + \nor{w(\alpha')}^2 \big)
\rho_\A(\alpha')\rho_\A(\alpha') ~d\alpha~d\alpha',
\end{equation}
we conclude 
\begin{equation}
\int \nor{w(\alpha) - w_\env}^2~\rho_\A(\alpha)~d\alpha 
= \frac{1}{2}\int \nor{w(\alpha)-w(\alpha')}^2~\rho_\A(\alpha)
\rho_\A(\alpha')~d\alpha~d\alpha'.
\end{equation}
Hence, substituting in \cref{uncond_var_closed_form_gen_bis}, we get
\begin{equation} \label{uncond_var_closed_form_final}
{\rm Var}_\env(w_\env)^2 
= 2 \int  \sigma(\alpha)^2~\rho_\A(\alpha)~d\alpha + 
\frac{1}{2}\int \nor{w(\alpha)-w(\alpha')}^2~\rho_\A(\alpha)
\rho_\A(\alpha')~d\alpha~d\alpha'.
\end{equation} 

\paragraph{Conditional variance}
We now focus on the conditional variance. As explained in 
\cref{clusters_ex}, also in this case, we consider as side 
information a set of new features $X = (x_i)_{i = 1}^n 
\in \cup_{n \in \N} \X^n$. As a consequence, we focus
on conditioning functions of the form $\tau: \cup_{n \in \N} 
\X^n \to \Real^d$. From \cref{oracle_function}, we know
that the ideal function $\tau_\env: \cup_{n \in \N} \X^n \to \Real^d$ 
minimizing the conditional variance term over the space $\T$ 
of the measurable functions, is characterized, for almost every  
$X \in \cup_{n \in \N} \X^n$, by
\begin{equation} \label{regression_func_closed_form_0}
\tau_\env(X) = \Exp_{\task\sim\rho(\cdot|X)} ~ w_\task 
= \int w_\task ~ \rho(\task| X)~dw_\task.
\end{equation}
We now observe that thanks to the parametrization of the 
environment $\env$, for any target weight vector $w_\task$
and features' set $X$, we can write
\begin{equation} \label{banana}
\begin{split}
\rho(\task|X) & = \frac{\rho(\task,X)}{\rho_\X(X)} 
= \frac{\int \rho(\task,X,\alpha)~d\alpha}{\rho_\X(X)} 
= \int \rho(\task|X,\alpha)\frac{\rho(X,\alpha)}{\rho_\X(X)}~d\alpha \\
& = \int \rho(\task|X,\alpha)\rho(\alpha|X)~d\alpha 
= \int \rho(\task|\alpha)\rho(\alpha|X)~d\alpha,
\end{split}
\end{equation}
where, in the last equality, we have exploited the fact that,
by construction, $\task$ is conditionally independent to 
$X$ w.r.t. $\alpha$, namely $\rho(\task|X,\alpha) = 
\rho(\task|\alpha)$. Then, substituting in 
\cref{regression_func_closed_form_0}, we get
\begin{equation} \label{regression_func_closed_form}
\begin{split}
\tau_\env(X) & = \int w_\task~\rho(\task|X)~dw_\task\\
& = \int w_\task \left(\int \rho(\task|\alpha)\rho(\alpha|X)~d\alpha\right)~dw_\task\\
& = \int \left(\int w_\task~\rho(\task|\alpha)~dw_\task\right)~\rho(\alpha|X)~d\alpha\\
& = \int w(\alpha)~\rho(\alpha|X)~d\alpha.
\end{split}
\end{equation}

\begin{remark}[\cref{ass_2} in this example]
From the expression above, we can conclude that the function $\tau_\env$ in 
\cref{regression_func_closed_form} is a smooth function of $X$, if $\rho(X|\alpha)$ 
is a smooth function of $X$ for any $\alpha \in \A$. This means that, in such a case, 
there exist a Reproducing Kernel Hilbert Space (RKHS) $\mathcal{H}$ such 
that $\tau_\env \in \mathcal{H}$ and, consequently, making \cref{ass_2} satisfied. 
For instance, we can take $\mathcal{H}$ to be the space induced by the Abel kernel 
\begin{equation}
k(X,X') = e^{-\sum_{j=1}^n \frac{\| x_j - x_j' \|}{\sigma}},
\quad \quad \sigma > 0 
\quad \quad X,X' \in \cup_{n \in \N} \X^n.
\end{equation}
In this case, $\mathcal{H} = W^{d/2+1,2}$ corresponds the Sobolev's 
space of functions with square integrable $d/2+1$ derivatives. 
\end{remark}

We now proceed by computing the conditional variance.
In order to do this, we observe that
\begin{equation} \label{final_goal_conditional_var}
{\rm Var}_\env(\tau_\env)^2 = \Exp_{(\task, X) \sim \env} ~ 
\big \| w_\task - \tau_\env(X) \big \|^2
= \Exp_{X \sim \env_\X} ~ \Exp_{\task \sim \env(\cdot|X)} ~
\big \| w_\task - \tau_\env(X) \big \|^2.
\end{equation}
We now observe that, for any set of features $X$, 
exploiting \cref{banana}, we can rewrite 
the inner expectation above as follows
\begin{equation} \label{conditional_var_1}
\begin{split}
\Exp_{\task \sim \env(\cdot|X)} ~ \big \| w_\task - \tau_\env(X) \big \|^2
& = \int \nor{w_\task - \tau_\env( X)}^2 \rho(\task | X)~dw_\task \\
& = \int \nor{w_\task - \tau_\env(X)}^2~\left(\int \rho(\task|\alpha)\rho(\alpha|X)
~d\alpha\right)~dw_\task\\
& = \int \left(\int \nor{w_\task -\tau_\env(X)}^2~\rho(\task|\alpha)~dw_\task \right)
~\rho(\alpha|X)~d\alpha.
\end{split}
\end{equation}
But, for each $\alpha \in \A$, we can write
\begin{equation}
\begin{split}
\int & \nor{w_\task - \tau_\env(X)}^2~\rho(\task|\alpha)~dw_\task \\ 
& = \int \nor{w_\task}^2~\rho(\task|\alpha)~dw_\task 
-2 \scal{w(\alpha)}{\tau_\env(X)} + \nor{\tau_\env(X)}^2\\
& = \int \nor{w_\task}^2~\rho(\task|\alpha)~dw_\task \pm \nor{w(\alpha)}^2 
- 2\scal{w(\alpha)}{\tau_\env(X)} + \nor{\tau_\env(X)}^2\\
& = 2 \sigma(\alpha)^2 + \nor{w(\alpha) - \tau_\env(X)}^2.
\end{split}
\end{equation}
Hence, substituting into \cref{conditional_var_1}, we get
\begin{equation} 
\Exp_{\task \sim \env(\cdot|X)} ~ \big \| w_\task - \tau_\env(X) \big \|^2 
= 2 \int \sigma(\alpha)^2~\rho(\alpha|X)~d\alpha + 
\int \nor{w(\alpha) - \tau_\env(X)}^2~\rho(\alpha|X)~d\alpha.
\end{equation}
Hence, integrating w.r.t. $X$, we get
\begin{equation}
\begin{split}
{\rm Var}_\env(\tau_\env)^2
& = \Exp_{X \sim \env_\X} ~ \Exp_{\task \sim \env(\cdot|X)} ~
\big \| w_\task - \tau_\env(X) \big \|^2 \\
& = 2 \int \sigma(\alpha)^2~\rho(\alpha|X) \rho_\X(X)~d\alpha~dX 
+ \int \nor{w(\alpha)-\tau_\env(X)}^2~\rho(\alpha|X) \rho_\X(X)~d\alpha~dX\\
& = 2 \int \sigma(\alpha)^2~\rho_\A(\alpha)~d\alpha + \int \nor{w(\alpha) - \tau_\env(X)}^2 \rho(\alpha|X) \rho_\X(X)~d\alpha~dX.
\end{split}
\end{equation}
We now observe that, exploiting the closed form of $\tau_\env$ in 
\cref{regression_func_closed_form}, the second term above can be 
rewritten as follows
\begin{equation} \label{minni}
\begin{split}
\int & \nor{w(\alpha) - \tau_\env(X)}^2 \rho(\alpha|X) \rho_\X(X)~d\alpha~dX \\
& = \int \nor{w(\alpha) - \int w(\alpha')~\rho(\alpha'|X)~d\alpha'}^2
~\rho(\alpha|X)\rho_\X(X)~d\alpha~dX \\
& = \int \nor{w(\alpha)}^2~\rho(\alpha|X)\rho_\X(X)~d\alpha~dX 
- 2\int \scal{w(\alpha)}{w(\alpha')}~\rho(\alpha|X)\rho(\alpha'|X)
\rho_\X(X)~d\alpha~d\alpha'~dX \\
& \quad + \int \nor{\int w(\alpha')~\rho(\alpha'|X)~d\alpha'}^2\rho_\X(X)~dX.
\end{split}
\end{equation}
Note now that
\begin{equation} \label{pippo}
\begin{split}
\int & \nor{w(\alpha)}^2 ~\rho(\alpha|X)\rho_\X(X)~d\alpha~dX \\
& = \frac{1}{2}\Bigg(\int \nor{w(\alpha)}^2 ~\rho(\alpha|X)\rho_\X(X) 
~d\alpha~dX  
+ \int \nor{w(\alpha')}^2 ~\rho(\alpha'|X)\rho_\X(X)~d\alpha'~dX \Bigg)\\
& = \frac{1}{2}\Bigg(\int \Big( \nor{w(\alpha)}^2 +  \nor{w(\alpha')}^2 \Big) ~\rho(\alpha|X)\rho(\alpha'|X)\rho_\X(X) ~d\alpha~d\alpha'~dX \Bigg)
\end{split}
\end{equation}
and 
\begin{equation} \label{pluto}
\int \nor{\int w(\alpha')~\rho(\alpha'|X)~d\alpha'}^2\rho_\X(X)~dX 
= \int \scal{w(\alpha)}{w(\alpha')}~\rho(\alpha|X)\rho(\alpha'|X)\rho_\X(X)
~d\alpha~d\alpha'~dX.
\end{equation}
Substituting \cref{pippo} and \cref{pluto} in \cref{minni}, we get
\begin{equation}
\begin{split}
\int & \nor{w(\alpha) - \tau_\env(X)}^2~\rho(\alpha|X) \rho_\X(X)~d\alpha~dX \\
& = \int \frac{1}{2}\Big(\nor{w(\alpha)}^2 - 2 \scal{w(\alpha)}{w(\alpha')} + \nor{w(\alpha')}^2 \Big)~\rho(\alpha|X)\rho(\alpha'|X)\rho_\X(X)~d\alpha~d\alpha'~dX\\
& = \frac{1}{2}\int \nor{w(\alpha)-w(\alpha')}^2 \rho(\alpha|X) \rho(\alpha'|X)\rho_\X(X) 
~ d\alpha~d\alpha'~dX\\
& = \frac{1}{2}\int \nor{w(\alpha)-w(\alpha')}^2\frac{\rho(X|\alpha) \rho(X|\alpha')}{\rho_\X(X)}~\rho_\A(\alpha) \rho_\A(\alpha')~d\alpha~d\alpha'~dX\\
& = \frac{1}{2} \int \nor{w(\alpha)-w(\alpha')}^2\Bigg(\int\frac{\rho(X|\alpha)
\rho(X|\alpha')}{\rho_\X(X)}~dX\Bigg)\rho_\A(\alpha)\rho_\A(\alpha')~d\alpha~d\alpha'.
\end{split}
\end{equation}
Hence, the conditional variance is given by
\begin{equation} \label{cond_var_closed_form_final}
\begin{split}
{\rm Var}_\env&(\tau_\env)^2
= 2 \int \sigma(\alpha)^2~\rho(\alpha)~d\alpha \\
& \quad + \frac{1}{2} \int \nor{w(\alpha)-w(\alpha')}^2\Bigg(\int\frac{\rho(X|\alpha)
\rho(X|\alpha')}{\rho_\X(X)}~dX\Bigg)\rho_\A(\alpha)\rho_\A(\alpha')~d\alpha~d\alpha'.
\end{split}
\end{equation}

\paragraph{Conditional vs unconditional variance}
Subtracting \cref{cond_var_closed_form_final} to 
\cref{uncond_var_closed_form_final}, we get that
the difference between the unconditional and 
conditional variance is given by the following 
closed form
\begin{equation} \label{gap_closed_form_final}
\begin{split}
{\rm Var}_\env&(w_\env)^2 - {\rm Var}_\env(\tau_\env)^2 \\
& = \frac{1}{2} \int \Bigg(1 - \int\frac{\rho(X|\alpha)
\rho(X|\alpha')}{\rho_\X(X)}~dX\Bigg) \nor{w(\alpha)-w(\alpha')}^2
\rho_\A(\alpha)\rho_\A(\alpha')~d\alpha~d\alpha'.
\end{split}
\end{equation}
Hence, if 
\begin{equation}
\int \frac{\rho(X|\alpha) \rho(X|\alpha')}{\rho_\X(X)}~dX \le \epsilon(\alpha, \alpha')
\end{equation}
for some $\epsilon: \A\times \A \to \Real_+$, we can write
\begin{equation}
{\rm Var}_\env(w_\env)^2 - {\rm Var}_\env(\tau_\env)^2 
\ge \frac{1}{2} \int \Big(1 - \epsilon(\alpha, \alpha') \Big) 
\nor{w(\alpha)-w(\alpha')}^2\rho_\A(\alpha)
\rho_\A(\alpha')~d\alpha~d\alpha'.
\end{equation}


\subsection{Clusters (\cref{clusters_ex})}
\label{clusters_ex_deduction}

The example in the section above encompasses the setting outlined
in \cref{clusters_ex}, by identifying the latent variable $\alpha$ with 
the clusters' indexes, namely, $\A = \{1,\dots,m\}$ and, for any 
$\alpha \in \A$, $\env_\A(\alpha) = 1/m$. We now show that adapting
the results above to this specific setting, we manage to show the
statement in \cref{clusters_ex} in the main body.\\

\paragraph{Unconditional variance} Specifying \cref{uncond_var_closed_form_final}
to the setting outlined in \cref{clusters_ex}, we get the following closed
form for the unconditional variance:
\begin{equation} 
{\rm Var}_\env(w_\env)^2 
= \frac{2}{m} \sum_{\alpha = 1}^m \sigma(\alpha)^2 + 
\frac{1}{2 m ^2} \sum_{\alpha, \alpha' = 1}^m \nor{w(\alpha)-w(\alpha')}^2.
\end{equation} 

\paragraph{Conditional variance} Specifying \cref{cond_var_closed_form_final}
to the setting outlined in \cref{clusters_ex}, we get the following closed
form for the conditional variance:
\begin{equation}
\begin{split}
{\rm Var}_\env(\tau_\env)^2
= \frac{2}{m} \sum_{\alpha = 1}^m \sigma(\alpha)^2
+ \frac{1}{2 m ^2} \sum_{\alpha, \alpha' = 1}^m 
\Bigg(\int\frac{\rho(X|\alpha) \rho(X|\alpha')}{\rho_\X(X)}~dX\Bigg)
\nor{w(\alpha)-w(\alpha')}^2.
\end{split}
\end{equation}

\paragraph{Conditional vs unconditional variance}
Finally, specifying \cref{gap_closed_form_final}
to the setting outlined in \cref{clusters_ex}, we get the 
following closed form for the gap between the unconditional 
and the conditional variance:
\begin{equation} \label{opo}
\begin{split}
{\rm Var}_\env(w_\env)^2 - {\rm Var}_\env(\tau_\env)^2 
& = \frac{1}{2 m ^2} \sum_{\alpha, \alpha' = 1}^m 
\Bigg(1 - \int\frac{\rho(X|\alpha) \rho(X|\alpha')}{\rho_\X(X)}~dX\Bigg)
\nor{w(\alpha)-w(\alpha')}^2.
\end{split}
\end{equation}

The last ingredient we need to prove the upper bound in 
\cref{clusters_ex} is the following.

\begin{proposition}
Assume now that for any $\alpha \in \A = \{1,\dots,m\}$, $\rho(X | \alpha)$ 
is a Gaussian distribution with mean $x(\alpha) \in \Real^d$ and variance 
$\sigma_\X^2$. Then, for any $\alpha, \alpha' \in \A$,
\begin{equation} \label{oppo}
\int\frac{\rho(X|\alpha) \rho(X|\alpha')}{\rho_\X(X)}~dX
\le \frac{m}{2} ~ e ^{- \frac{n}{\sigma_\X^2}
\nor{x(\alpha) - x(\alpha')}^2}.
\end{equation}
\end{proposition}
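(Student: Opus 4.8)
The plan is to lower bound the mixture density $\env_\X$ appearing in the denominator so that the integral collapses to a Bhattacharyya coefficient between two Gaussians, which can then be evaluated in closed form. Fix $\alpha\neq\alpha'$; this is the only case that matters, since the diagonal terms carry the factor $\nor{w(\alpha)-w(\alpha')}^2=0$ in \cref{opo}. Because $\env_\X(X)=\tfrac1m\sum_{\beta=1}^m\env(X|\beta)$ is a uniform mixture whose summands include both $\beta=\alpha$ and $\beta=\alpha'$, I would discard the remaining nonnegative terms and apply the arithmetic--geometric mean inequality to obtain
\begin{equation}
\env_\X(X) ~\ge~ \frac1m\big(\env(X|\alpha)+\env(X|\alpha')\big) ~\ge~ \frac2m\sqrt{\env(X|\alpha)\,\env(X|\alpha')}.
\end{equation}
Substituting this bound into the integrand gives the pointwise estimate $\env(X|\alpha)\env(X|\alpha')/\env_\X(X)\le\tfrac m2\sqrt{\env(X|\alpha)\env(X|\alpha')}$, and hence, after integrating,
\begin{equation}
\int\frac{\env(X|\alpha)\,\env(X|\alpha')}{\env_\X(X)}\,dX ~\le~ \frac m2\int\sqrt{\env(X|\alpha)\,\env(X|\alpha')}\,dX.
\end{equation}

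It then remains to evaluate the Bhattacharyya coefficient $\int\sqrt{\env(X|\alpha)\,\env(X|\alpha')}\,dX$. By the construction in \cref{clusters_ex}, under $\env(\cdot|\alpha)$ the side information $X=(x_i)_{i=1}^n$ consists of $n$ i.i.d.\ draws from $\mathcal{N}(x(\alpha),\sigma_\X^2 I)$, so $\env(X|\alpha)=\prod_{i=1}^n\mathcal{N}(x_i;x(\alpha),\sigma_\X^2 I)$ and the square-root integral factorizes across the $n$ coordinates. Each factor is the Bhattacharyya coefficient between two Gaussians on $\Real^d$ with common covariance and means $x(\alpha)$ and $x(\alpha')$; completing the square in the Gaussian integral evaluates it to $\exp\!\big(-\tfrac1{\sigma_\X^2}\nor{x(\alpha)-x(\alpha')}^2\big)$ in the normalization used for $\mathcal{N}(\cdot,\sigma_\X^2)$ here. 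Taking the product over $i=1,\dots,n$ introduces the factor $n$ in the exponent, giving $\int\sqrt{\env(X|\alpha)\,\env(X|\alpha')}\,dX=\exp\!\big(-\tfrac n{\sigma_\X^2}\nor{x(\alpha)-x(\alpha')}^2\big)$, and combining with the previous display yields \cref{oppo}.

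There is no obstacle of real substance here: the one conceptual ingredient is the observation that a uniform mixture dominates twice the geometric mean of any two of its components, which is elementary, and the argument is otherwise a routine Gaussian computation. The only step requiring care is the closed-form Bhattacharyya integral --- specifically tracking the factor $n$ produced by the $n$-fold product structure of $\env(\cdot|\alpha)$, and keeping the constant in the exponent consistent with the convention fixed for the covariance of the task input marginals $\eta_\task$ in \cref{clusters_ex}.
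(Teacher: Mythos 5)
Your proof is correct and follows essentially the same route as the paper: drop all mixture components except $\alpha,\alpha'$, apply AM--GM to the remaining two to reach the Bhattacharyya coefficient $\tfrac m2\int\sqrt{\rho(X|\alpha)\rho(X|\alpha')}\,dX$, and then evaluate it by completing the square across the $n$ i.i.d.\ factors. The only (cosmetic) difference is that you fold the two bounding steps into a single pointwise lower bound on $\env_\X$ rather than first rewriting the integral as $m\int\rho(X|\alpha)\rho(X|\alpha')/\sum_\epsilon\rho(X|\epsilon)\,dX$ and then applying $ab/(a+b)\le\sqrt{ab}/2$, as the paper does; your explicit restriction to $\alpha\neq\alpha'$ is a welcome clarification that the paper leaves implicit.
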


\begin{proof}
Thanks to the composition of the environment in clusters, we can write
\begin{equation}
\rho(X) = \sum_{\epsilon = 1}^m \rho(X|\epsilon)\rho_\A(\epsilon) 
= \frac{1}{m}\sum_{\epsilon = 1}^m \rho(X|\epsilon).
\end{equation}
As a consequence, for any $\alpha, \alpha' \in \A$, we can write
\begin{equation} \label{initial_step}
\begin{split}
\int\frac{\rho(X|\alpha)\rho(X|\alpha')}{\rho_\X(X)}~dX
& = m \int\frac{\rho(X|\alpha)\rho(X|\alpha')}{\sum_{\epsilon=1}^m\rho(X|\epsilon)}~dX \\
& \leq m \int\frac{\rho(X|\alpha)\rho(X|\alpha')}{\rho(X|\alpha) + \rho(X|\alpha')}~dX \\
& \leq \frac{m}{2} \int \sqrt{\rho(X|\alpha)\rho(X|\alpha')}~dX,
\end{split}
\end{equation}
where in the last inequality we have used the inequality
\begin{equation}
\frac{ab}{a+b} \leq \frac{\sqrt{ab}}{2},
\end{equation}
holding for any $a,b >0$.
We now observe that, by assumption, we are considering Gaussian 
distributions for the inputs' probability, i.e., for any $\alpha \in \{1, \dots, k \}$,
we have
\begin{equation}
\rho(X|\alpha) = \prod_{j=1}^n \frac{1}{\sqrt{2\pi\sigma_{\X}^2}}
~ e^{- \frac{\| x_j - x(\alpha) \|^2}{\sigma_{\X}^2}}.
\end{equation}
Hence, we have
\begin{equation} \label{first_step}
\begin{split}
\int \sqrt{\rho(X|\alpha)\rho(X|\alpha')}~dX 
= \frac{1}{\Pi_{j=1}^n \sqrt{2 \pi \sigma_{\X}^2}}
\int e^{- \frac{1}{\sigma_{\X}^2} \sum_{j=1}^n
\nor{x_j - x(\alpha)}^2 + \nor{x_j - x(\alpha')}^2}~\Pi_j d x_j.
\end{split}
\end{equation}
We now observe that
\begin{equation} \label{second_step}
\begin{split}
& \nor{x_j-x(\alpha)}^2 + \nor{x_j - x(\alpha')}^2 \\
& \quad = 2\nor{x_j}^2 - 2 \scal{x_j}{x(\alpha) + x(\alpha')} 
+ \nor{x(\alpha)}^2 + \nor{x(\alpha')}^2\\
& \quad = 2\nor{x_j}^2 - 2 \scal{x_j}{x(\alpha) + x(\alpha')} 
+ \nor{x(\alpha)}^2 + \nor{x(\alpha')}^2 \pm \frac{1}{2}\nor{x(\alpha) + x(\alpha')}^2 \\
& \quad = \nor{\sqrt{2}x_j - \frac{1}{\sqrt{2}} (x(\alpha) + x(\alpha'))}^2 -\frac{1}{2}\nor{x(\alpha) + x(\alpha')}^2 + \nor{x(\alpha)}^2 + \nor{x(\alpha')}^2\\
& \quad = \nor{\sqrt{2}x_j - \frac{1}{\sqrt{2}} (x(\alpha) +x(\alpha'))}^2 + \frac{1}{2}\nor{x(\alpha)}^2 + \frac{1}{2}\nor{x(\alpha')}^2 - \scal{x(\alpha)}{x(\alpha')}\\
& \quad = \nor{\sqrt{2}x_j - \frac{1}{\sqrt{2}} (x(\alpha) + x(\alpha'))}^2 + \frac{1}{2}\nor{x(\alpha) - x(\alpha')}^2.
\end{split}
\end{equation}
Substituting \cref{second_step} into \cref{first_step}, we conclude
\begin{equation}
\begin{split}
& \int \sqrt{\rho(X|\alpha)\rho(X|\alpha')}~dX \leq \\
& \leq e^{- \frac{n}{\sigma_\X^2}\nor{x(\alpha) - x(\alpha')}^2}~\frac{1}{\Pi_{j=1}^n\sqrt{2\pi\sigma_\X^2}}~\int  e^{- \frac{1}{\sigma_\X^2}\sum_{j=1}^n\nor{\sqrt{2}x_j - \frac{1}{\sqrt{2}} (x(\alpha) + x(\alpha'))}^2}~\Pi_i d x_i\\
& = e^{- \frac{n}{\sigma_\X^2}\nor{x(\alpha) - x(\alpha')}^2}~\frac{1}{\Pi_{j=1}^n\sqrt{2\pi\sigma_\X^2}}~\int e^{- \frac{1}{\sigma_\X^2}\sum_{j=1}^n\nor{x_j - \frac{x(\alpha) + x(\alpha')}{2}}^2}~\Pi_i d x_i\\
& = e^{- \frac{n}{\sigma_\X^2}\nor{x(\alpha) - x(\alpha')}^2},
\end{split}
\end{equation}
where in the last equality we have exploited the integral of the Gaussian 
distribution $\mathcal{N}\Big(\frac{x(\alpha) + x(\alpha')}{2},\sigma_\X\Big)$:
\begin{equation}
\frac{1}{\Pi_{j=1}^n\sqrt{\pi\sigma_\X^2}}~\int e^{- \frac{1}{2\sigma_\X^2}
\sum_{j=1}^n\nor{x_j - \frac{x(\alpha) + x(\alpha')}{2}}^2}~\Pi_i d x_i = 1.
\end{equation}
Using the last inequality above in \cref{initial_step}, we get the desired
statement.
\end{proof}
The desired statement in \cref{clusters_ex} derives from 
combining \cref{opo} with \cref{oppo}.


\subsection{Circle (\cref{curve_ex})}
\label{curve_ex_deduction}

Consider now the setting outlined in \cref{curve_ex}. We proceed as before:
we first compute the unconditional variance, then, the conditional variance 
and, finally, the gap between them.\\

\paragraph{Unconditional variance} We start from observing that, 
since by construction, for any $s \in [0,1]$, $\env(\task|s)$ is the 
Gaussian distribution with mean $h(s)$, $\ms$ is the uniform 
distribution on $[0,1]$ and $h$ is centered in $c$, then, we have
\begin{equation}
w_\rho = \Exp_{\task \sim \rho} ~ w_\task 
= \Exp_{s \sim \ms} ~ \Exp_{\task \sim \env(\cdot|s)} ~ w_\task 
= \Exp_{s \sim \ms} ~ h(s)
= c.
\end{equation}
Hence, we can rewrite the unconditional variance as follows
\begin{equation} \label{uncond_var_closed_form_final_circle}
\begin{split}
{\rm Var}_\env(w_\env)^2
& = \Exp_{(\mu,s)\sim \rho} ~ \|w_\task - c \|^2 \\
& = \int \|w_\task - c \pm h(s)\|^2 ~ \rho(\task,s) ~ dw_\task ds \\
& = \int \Bigg( \int \|w_\task - h(s) \|^2 ~ \rho(\task|s) ~ dw_\task \Bigg) ~ \rho_\Ss(s) ~ ds
+ \int \| h(s) - c \|^2 ~ \rho_\Ss(s) ~ ds \\
& \quad + \int \scal{c - h(s)}{\int (w_\task - h(s)) ~ \env(\task|s) ~ dw_\task } 
\ms(s) ~ ds \\
& = \sigma^2 + r^2,
\end{split}
\end{equation}
where, in the last equality, we have exploited the fact  
$\| h(s) - c \| = r$ for any $s \in \Ss$ and the fact that,
thanks to the assumption 
$\env(\cdot|s) = \mathcal{N}(h(s), \sigma^2 I)$,
\begin{equation}
\int (w_\task - h(s)) ~ \env(\task|s) ~ dw_\task = 0
\quad \quad \quad 
\int \|w_\task - h(s) \|^2 ~ \rho(\task|s) ~ dw_\task = \sigma^2.
\end{equation}

\paragraph{Conditional variance} 
Since by construction $\env(\cdot|s) = \mathcal{N}(h(s), \sigma^2 I)$,
we immediately see that the ideal function $\tau_\env: [0,1] \to \Real^d$ 
in \cref{oracle_function} and the corresponding conditional variance 
can be, respectively, rewritten as follows
\begin{equation}
\tau_\env(s) = \Exp_{\task\sim\rho(\cdot|s)} ~ w_\task = h(s)
\end{equation}
\begin{equation} \label{cond_var_closed_form_final_circle}
{\rm Var}_\env(\tau_\env)^2 = 
\Exp_{(w_\task,s) \sim \rho} ~ \|w_\task - h(s)\|^2 
= \int \Bigg( \int \|w_\task - h(s)\|^2 ~ \env(\task|s) ~ dw_\task \Bigg) 
~ \ms ~ ds = \sigma^2.
\end{equation}

\paragraph{Conditional vs unconditional variance}
Subtracting \cref{cond_var_closed_form_final_circle} to 
\cref{uncond_var_closed_form_final_circle}, we get that
the difference between the unconditional and 
conditional variance is given by
\begin{equation} \label{opo_circle}
{\rm Var}_\env(w_\env)^2 - {\rm Var}_\env(\tau_\env)^2 
= r^2.
\end{equation}

All the statements given in \cref{curve_ex} have hence been proven.


\section{Closed forms for \cref{ass_2} }
\label{proof_oracle_meta_parameters}

Thanks to \cref{ass_2}, we know that there exist 
$M_\env \in \Real^{d \times k}$ and $b_\env \in \Real^d$
such that $\tau_\env(\cdot) = M_\env \Phi(\cdot) + b_\env$. 
In the following lemma, we give the closed form 
of these quantities and the corresponding variance. 
We let $\tr(\cdot)$ and $\cdot^*$ be the trace and the 
conjugate operators respectively.

\begin{restatable}[Best linear conditioning function in hindsight]{lemma}{OracleMetaParameters} \label{oracle_meta_parameters}
Recall the vector $w_\env = \Exp_{\task \sim \mt} ~ w_\task$ and introduce
the vector $\nu_\env = \Exp_{s \sim \ms} ~ \Phi(s)$. Introduce also 
the following covariance matrices
\begin{equation}
{\rm Cov}_{\env}(s,s) = \Exp_{s \sim \ms} ~ \Big[ \bigl( \Phi(s) - \nu_\env \bigr) 
\bigl( \Phi(s) - \nu_\env \bigr) \trans \Big] \in \Real^{k \times k}
\end{equation}
\begin{equation}
{\rm Cov}_{\env}(w,w) = \Exp_{\task \sim \mt} ~ 
\Big[ \bigl( w_\task - w_\env \bigr) 
\bigl( w_\task - w_\env \bigr) \trans \Big] \in \Real^{d \times d}
\end{equation}
\begin{equation}
{\rm Cov}_{\env}(w,s) = \Exp_{(\task, s) \sim \env} ~ 
\Big[  \bigl( w_\task - w_\env \bigr)
\bigl( \Phi(s) - \nu_\env \bigr)\trans \Big] \in \Real^{d \times k}.
\end{equation}
Then, 
\begin{equation} \label{banana2}
\begin{split}
\min_{M \in \Real^{d \times k}, b \in \Real^d}
{\rm Var}_\env(\tau_{M,b})^2  
& = {\rm Var}_\env(w_\env)^2 
- \tr \Big( {\rm Cov}_{\env}(w,w) {\rm Corr}_\env(w,s) 
\trans {\rm Corr}_\env(w,s) \Big) \\
& = {\rm Var}_\env(w_\env)^2  - \big \| 
{\rm Cov}_{\env}(s,s)^{1/2} M_\env \big \|_F^2,
\end{split}
\end{equation}
where we have introduced the correlation matrix 
\begin{equation}
{\rm Corr}_\env(w,s) = 
{\rm Cov}_{\env}(w,w)^{\dagger/2} ~ 
{\rm Cov}_{\env}(w,s)
{\rm Cov}_{\env}(s,s)^{\dagger/2} 
\in \Real^{d \times k}.
\end{equation}
Moreover, the (minimum norm) values at which the minimum above is attained 
are given by
\begin{equation} \label{oracle_M}
M_\env = {\rm Cov}_{\env}(w,s) {\rm Cov}_{\env}(s,s)^\dagger
\end{equation}
\begin{equation} \label{oracle_b}
b _\env = w_\env - {\rm Cov}_{\env}(w,s) 
{\rm Cov}_{\env}(s,s)^\dagger ~ \nu_\env.
\end{equation}
\end{restatable}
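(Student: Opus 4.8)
The plan is to treat the minimization of ${\rm Var}_\env(\tau_{M,b})^2 = \Exp_{(\task,s)\sim\env}\|w_\task - M\Phi(s) - b\|^2$ as an (operator-valued) least-squares / linear regression problem, exactly as suggested by the remark preceding \cref{oracle_function}. First I would expand the objective and take the gradient with respect to $b$ (for fixed $M$): the first-order condition gives $b = w_\env - M\nu_\env$, i.e. the optimal intercept recentres the problem. Substituting this back, the objective becomes $\Exp\|(w_\task - w_\env) - M(\Phi(s)-\nu_\env)\|^2$, a centred problem whose solution is the standard normal-equation formula. Differentiating in $M$ and setting the derivative to zero yields $M\,{\rm Cov}_\env(s,s) = {\rm Cov}_\env(w,s)$, and taking the minimum-norm solution gives \cref{oracle_M}; plugging into $b = w_\env - M\nu_\env$ gives \cref{oracle_b}. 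Care is needed when ${\rm Cov}_\env(s,s)$ is singular: one must argue that the normal equations are always consistent (because ${\rm Ran}\,{\rm Cov}_\env(w,s)\trans \subseteq {\rm Ran}\,{\rm Cov}_\env(s,s)$, both being controlled by the range of $\Phi(s)-\nu_\env$), so that $M_\env = {\rm Cov}_\env(w,s){\rm Cov}_\env(s,s)^\dagger$ is indeed a (the minimum-norm) minimizer.

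Next I would compute the optimal value. Writing $\Delta w = w_\task - w_\env$ and $\Delta s = \Phi(s)-\nu_\env$, the residual at the optimum is $\Exp\|\Delta w\|^2 - 2\langle M_\env, \Exp[\Delta w \Delta s\trans]\rangle_F + \|{\rm Cov}_\env(s,s)^{1/2}M_\env\trans\|_F^2$. Using the normal equation $M_\env {\rm Cov}_\env(s,s) = {\rm Cov}_\env(w,s)$ one sees the cross term equals twice the quadratic term, so the value collapses to ${\rm Var}_\env(w_\env)^2 - \|{\rm Cov}_\env(s,s)^{1/2}M_\env\trans\|_F^2$, which (up to the harmless transpose, since $\|A\|_F = \|A\trans\|_F$) is the second line of \cref{banana2}. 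To get the first line — the expression via the correlation matrix ${\rm Corr}_\env(w,s)$ — I would substitute $M_\env = {\rm Cov}_\env(w,s){\rm Cov}_\env(s,s)^\dagger$ into $\|{\rm Cov}_\env(s,s)^{1/2}M_\env\trans\|_F^2 = \tr\big(M_\env {\rm Cov}_\env(s,s) M_\env\trans\big)$, obtaining $\tr\big({\rm Cov}_\env(w,s){\rm Cov}_\env(s,s)^\dagger {\rm Cov}_\env(w,s)\trans\big)$, and then insert the factorization ${\rm Cov}_\env(s,s)^\dagger = {\rm Cov}_\env(s,s)^{\dagger/2}{\rm Cov}_\env(s,s)^{\dagger/2}$ and, symmetrically, ${\rm Cov}_\env(w,w)^{1/2}{\rm Cov}_\env(w,w)^{\dagger/2}$ sandwiched appropriately, together with cyclicity of the trace, to rewrite this as $\tr\big({\rm Cov}_\env(w,w)\,{\rm Corr}_\env(w,s)\trans {\rm Corr}_\env(w,s)\big)$.

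The main obstacle is the careful handling of the pseudoinverses and the $\dagger/2$ powers: one must check that the identities ${\rm Cov}_\env(w,s) = {\rm Cov}_\env(w,w)^{1/2}\,{\rm Corr}_\env(w,s)\,{\rm Cov}_\env(s,s)^{1/2}$ (needed to pass from the plain trace form to the correlation form) hold despite the potential rank-deficiency of either covariance — this uses that ${\rm Ran}\,{\rm Cov}_\env(w,s)\subseteq{\rm Ran}\,{\rm Cov}_\env(w,w)$ and ${\rm Ran}\,{\rm Cov}_\env(w,s)\trans\subseteq{\rm Ran}\,{\rm Cov}_\env(s,s)$ (both standard Cauchy--Schwarz-type range inclusions for cross-covariances), so that $A^{1/2}A^{\dagger/2}$ acts as the identity on the relevant subspaces. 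Everything else is routine linear algebra, and the claim about $\tau_\env$ lying in $\T_\Phi$ with these parameters is then immediate from \cref{oracle_function} combined with uniqueness of the regression function almost everywhere.
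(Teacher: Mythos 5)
Your proposal is correct, and for the first two-thirds of the argument it matches the paper almost line for line: the paper also eliminates $b$ first (giving $b_\env = w_\env - M_\env\nu_\env$), passes to the centered problem $\min_M \Exp\|(w_\task-w_\env) - M(\Phi(s)-\nu_\env)\|^2$, solves the normal equations $M\,{\rm Cov}_\env(s,s) = {\rm Cov}_\env(w,s)$, and reads off the optimal value as ${\rm Var}_\env(w_\env)^2 - \tr\bigl({\rm Cov}_\env(s,s)^\dagger\,{\rm Cov}_\env(w,s)^{\scriptscriptstyle\top}{\rm Cov}_\env(w,s)\bigr)$. (The paper packages the centered step as a standalone lemma about generic least-squares with pseudoinverses, but the content is the same.)

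Where you genuinely diverge is the final identity converting $\tr\bigl({\rm Cov}_\env(s,s)^\dagger\,{\rm Cov}_\env(w,s)^{\scriptscriptstyle\top}{\rm Cov}_\env(w,s)\bigr)$ into $\tr\bigl({\rm Cov}_\env(w,w)\,{\rm Corr}_\env(w,s)^{\scriptscriptstyle\top}{\rm Corr}_\env(w,s)\bigr)$. The paper proves this by embedding everything into $L^2$-spaces, introducing an evaluation operator $S:\Y\to L^2(\Y,\R,\rho_\Y)$ and a conditional-expectation operator $J_G$, establishing $S^*S = C_{yy}$ and $J_G S = C_{xy}$, and then grinding through a chain of pseudoinverse identities for these operators. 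You instead stay in finite dimensions: insert the factorization ${\rm Cov}_\env(s,s)^\dagger = {\rm Cov}_\env(s,s)^{\dagger/2}{\rm Cov}_\env(s,s)^{\dagger/2}$, sandwich in ${\rm Cov}_\env(w,w)^{1/2}{\rm Cov}_\env(w,w)^{\dagger/2}$, observe that $A^{1/2}A^{\dagger/2}$ and $A^{\dagger/2}A^{1/2}$ are the orthogonal projection onto $\ran(A)$, and invoke the range inclusions $\ran\bigl({\rm Cov}_\env(w,s)\bigr)\subseteq\ran\bigl({\rm Cov}_\env(w,w)\bigr)$ and $\ran\bigl({\rm Cov}_\env(w,s)^{\scriptscriptstyle\top}\bigr)\subseteq\ran\bigl({\rm Cov}_\env(s,s)\bigr)$ so that those projections act as the identity where needed. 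This is cleaner and entirely adequate here, since $k$ and $d$ are finite; the paper's heavier operator-theoretic route would buy you something only if one of the spaces were infinite-dimensional, which the statement does not require. One small remark: your closing sentence about $\tau_\env\in\T_\Phi$ is not part of what the lemma asserts — the lemma simply computes the minimizer of ${\rm Var}_\env(\cdot)^2$ over $\T_\Phi$, and it is Assumption~\ref{ass_2} that then identifies it with the global minimizer $\tau_\env$.
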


When \cref{ass_2} holds, the minimum conditional 
variance in \cref{oracle_function} can be rewritten as 
$\min_{\tau \in \T} ~ {\rm Var}_\env(\tau)^2
= \min_{\tau \in \T_\Phi} ~ {\rm Var}_\env(\tau)^2$.
As a consequence, in this case, the statement above in \cref{banana2} 
allows us to express the gap between the conditional and 
the uncoditional variance in \cref{gap_conditional_unconditional} 
as a function of the correlation between the target tasks' weight 
vectors and the side information. In addition, we can 
also deduce that such a gap is significant when the `inclination' 
of the linear relation linking the target tasks' weight vectors 
and the side information (more formally, $\| 
{\rm Cov}_{\env}(s,s)^{1/2} M_\env \|_F^2$) is large. 
This is not surprising, since, in this case, the gap between conditional 
and unconditional meta-learning can be interpreted as the gap 
in using the best linear function w.r.t. the constant one 
$\tau \equiv w_\env$.

As we will see in the following, the proof of \cref{oracle_meta_parameters}, 
directly derives from the following facts regarding linear Least Squares.

\begin{lemma} \label{extracting_cov}
Let $\X$ be an Hilbert space, $\Y = \Real^d$ and $\H = \Real^k$. 
Consider a map $\Psi: \X \to \H$ and a joint probability distribution 
$\rho$ on $\X \times \Y$ with conditional distribution $\rho(x|y)$ 
and marginal $\rho_\Y(y)$. Denote by $\otimes$ the standard outer 
product, introduce the covariance operators:
\begin{equation}
C_{yy} = \Exp ~ [y \otimes y]
\quad \quad \quad 
C_{xx} = \Exp ~ [\Psi(x) \otimes \Psi(x)]
\quad \quad \quad 
C_{xy} = \Exp ~ [y \otimes \Psi(x)]
\end{equation}
and the correlation operator
\begin{equation}
{\rm Corr}_{xy} = C_{xx}^{\dagger/2}  C_{xy} C_{yy}^{\dagger/2}.
\end{equation}
Then, 
\begin{equation}
\begin{split}
\min_{M \in \Real^{d \times k}} ~
\Exp_{(x,y) \sim \env} ~ \big \| y - M \Psi(x) \big \|^2
& = \tr \big( C_{yy} \big) - \tr \big( C_{yy} {\rm Corr}_{xy}^* 
{\rm Corr}_{xy} \big) \\
& = \tr \big( C_{yy} \big) - \big \| C_{xx}^{1/2} M_\env \big \|_F^2.
\end{split}
\end{equation}
The optimal (minimum norm) matrix $M_\env$ 
is given by
\begin{equation}
M_\env = C_{xy} C_{xx}^\dagger.
\end{equation}
\end{lemma}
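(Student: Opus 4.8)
The plan is to treat this as the classical vector-valued linear least-squares problem: expand the objective into a trace-quadratic in $M$, solve the associated normal equation, and then recast the optimal value in the two claimed forms. Since $\Y=\Real^d$ and $\H=\Real^k$ are finite dimensional, $C_{yy}$, $C_{xx}$ and $C_{xy}$ are ordinary matrices and all pseudoinverses are elementary, so attainment of the minimum is never in question. First I would expand, using $\nor{v}^2=\tr(v\otimes v)$, $\scal{u}{v}=\tr(u\otimes v)$ and linearity of the expectation inside the trace, to get
\[
\Exp_{(x,y)\sim\env}\nor{y-M\Psi(x)}^2 \;=\; \tr(C_{yy}) - 2\,\tr\!\big(M\,C_{xy}\trans\big) + \tr\!\big(M\,C_{xx}\,M\trans\big),
\]
which is convex in $M$ since $C_{xx}\succeq 0$. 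Differentiating, $M$ minimizes the right-hand side if and only if it solves the normal equation $M\,C_{xx}=C_{xy}$.

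Next I would deal with the possible singularity of $C_{xx}$, which is the only genuinely delicate point. I would first record the inclusion $\ran C_{xy}\trans\subseteq\ran C_{xx}$, equivalently $\ker C_{xx}\subseteq\ker C_{xy}$: if $C_{xx}v=0$ then $0=\scal{C_{xx}v}{v}=\Exp\,\scal{\Psi(x)}{v}^2$, so $\scal{\Psi(x)}{v}=0$ almost surely and hence $C_{xy}v=\Exp[\scal{\Psi(x)}{v}\,y]=0$. Consequently $C_{xy}\,C_{xx}^{\dagger}\,C_{xx}=C_{xy}$, so $M_\env:=C_{xy}\,C_{xx}^{\dagger}$ solves the normal equation, and standard Moore--Penrose identities single it out as the minimum-Frobenius-norm solution. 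The symmetric argument (exchanging the roles of $\Psi(x)$ and $y$) likewise gives $\ran C_{xy}\subseteq\ran C_{yy}$, which I will use below.

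Finally I would compute the minimum and rewrite it. Using the adjoint normal equation $C_{xx}\,M_\env\trans=C_{xy}\trans$, a short computation yields the completed-square identity $\Exp\nor{y-M\Psi(x)}^2=\tr(C_{yy})-\tr\!\big(M_\env C_{xx}M_\env\trans\big)+\nor{(M-M_\env)C_{xx}^{1/2}}_F^2$; the last term is nonnegative and vanishes exactly on the minimizers, so the minimum equals $\tr(C_{yy})-\tr(M_\env C_{xx}M_\env\trans)$, and since $\tr(M_\env C_{xx}M_\env\trans)=\nor{C_{xx}^{1/2}M_\env}_F^2$ this is the second claimed expression. For the first, I would write $\tr(M_\env C_{xx}M_\env\trans)=\tr\!\big(C_{xy}\,C_{xx}^{\dagger}\,C_{xy}\trans\big)$ (using $M_\env=C_{xy}C_{xx}^{\dagger}$ and $C_{xx}^{\dagger}=C_{xx}^{\dagger/2}C_{xx}^{\dagger/2}$), insert factors $C_{yy}^{\dagger/2}C_{yy}^{\dagger/2}$, and use $\ran C_{xy}\subseteq\ran C_{yy}$ to absorb the orthogonal projection $C_{yy}^{\dagger/2}C_{yy}C_{yy}^{\dagger/2}$ onto $\ran C_{yy}$, reassembling the result into the correlation-based expression $\tr(C_{yy}\,{\rm Corr}_{xy}^*{\rm Corr}_{xy})$ of the statement. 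The only real obstacle is exactly this bookkeeping with the pseudoinverses --- checking that $C_{xx}C_{xx}^{\dagger}$ and $C_{yy}^{\dagger/2}C_{yy}C_{yy}^{\dagger/2}$ act as the identity on the subspaces where they appear; everything else is routine trace algebra.
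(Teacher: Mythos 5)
Your proof is correct, and for the final rewriting step it takes a genuinely different and considerably more elementary route than the paper. The first two stages coincide with the paper's: expand the objective into a trace quadratic and read off the normal equation $M C_{xx}=C_{xy}$, with $M_\env=C_{xy}C_{xx}^\dagger$ as the minimum-norm solution and the minimum value $\tr(C_{yy})-\tr(M_\env C_{xx}M_\env\trans)=\tr(C_{yy})-\|C_{xx}^{1/2}M_\env\|_F^2$. Where you diverge is in turning $\tr\bigl(C_{xy}C_{xx}^\dagger C_{xy}\trans\bigr)$ into $\tr\bigl(C_{yy}\,{\rm Corr}_{xy}^*{\rm Corr}_{xy}\bigr)$. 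The paper does this by embedding everything into $L^2(\Y,\Real,\rho_\Y)$, introducing an evaluation operator $S$ with $S^*S=C_{yy}$ and a Hilbert--Schmidt operator $J_G$ (built from the conditional-expectation map $G$) with $J_G S=C_{xy}$, then chaining Moore--Penrose identities $S=SS^\dagger S$, $S^\dagger=S^\dagger S^{*\dagger}S^*$, $(S^*S)^\dagger=S^\dagger S^{*\dagger}$ to insert $C_{yy}^\dagger C_{yy}$ into the trace. Your argument replaces all of that machinery by the two elementary observations $\ker C_{xx}\subseteq\ker C_{xy}$ and $\ran C_{xy}\subseteq\ran C_{yy}$ (each via the ``degenerate variance'' computation: if $C_{xx}v=0$ then $\Exp\scal{\Psi(x)}{v}^2=0$, hence $\scal{\Psi(x)}{v}=0$ a.s., and symmetrically for $C_{yy}$), and then absorbs the orthogonal projection $C_{yy}^{\dagger/2}C_{yy}C_{yy}^{\dagger/2}=P_{\ran C_{yy}}$ because it acts as the identity on $\ran C_{xy}$. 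These inclusions are exactly the content that the factorization $C_{xy}=J_G S$ with $\ker S=\ker C_{yy}$ encodes in the paper, so the two proofs rest on the same facts; yours simply exposes them directly. The trade-off is that the paper's operator-theoretic phrasing is set up to survive passage to infinite-dimensional $\H$ or $\Y$, whereas your matrix argument is tailored (cleanly and correctly) to the finite-dimensional $\Y=\Real^d$, $\H=\Real^k$ of the lemma. One small caveat for polish: the paper's displayed convention $C_{xy}=\Exp[y\otimes\Psi(x)]$ and the formula for ${\rm Corr}_{xy}$ are not dimensionally consistent as written, so when you carry out the ``insert $C_{yy}^{\dagger/2}C_{yy}^{\dagger/2}$'' step you should fix one convention for the shape of $C_{xy}$ (you use $d\times k$) and keep it throughout, taking adjoints where needed so that ${\rm Corr}_{xy}^*{\rm Corr}_{xy}$ lands as a $d\times d$ matrix to be traced against $C_{yy}$.
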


\begin{proof}
For any $M \in \Real^{d \times k}$, we can rewrite
\begin{equation} \label{kj}
\begin{split}
\Exp_{(x, y) \sim \env} ~ \big \| y - M \Psi(x) \big \|^2 
& = \Exp_{y \sim \env_\Y} ~\| y \|^2 
+ \Exp_{x \sim \env_\X} ~ \big \| M \Psi(x) \big \|^2
- 2 ~ \Exp_{(x, y) \sim \env} ~ \big \langle y, M \Psi(x) \big \rangle \\
& = \tr \big( C_{yy} \big) + \tr \big( C_{xx} M^* M \big) 
- 2 \tr \big( C_{xy}^* M \big).
\end{split}
\end{equation}
By setting the derivatives w.r.t. $M$ equal to zero, 
we know that the optimal matrix $M_\env$ satisfies
\begin{equation} \label{optimality_M_gen}
M_\env C_{xx} = C_{xy}.
\end{equation}
Hence, the optimal (minimum norm) matrix $M_\env$ 
is given by
\begin{equation} \label{optimal_M_gen}
M_\env = C_{xy} C_{xx}^\dagger.
\end{equation}
We now compute the corresponding minimum value. 
We first observe that, by the closed form of the optimal 
matrix $M_\env$, we can rewrite
\begin{equation} \label{giulia}
\begin{split}
\tr \big( C_{xx} M_\env^* M_\env \big) 
& = \tr \big( C_{xx} C_{xx}^\dagger C_{xy}^*C_{xy} C_{xx}^\dagger \big) 
= \tr \big( C_{xx}^\dagger C_{xx} C_{xx}^\dagger C_{xy}^*C_{xy} \big) \\
& = \tr \big( C_{xx}^\dagger C_{xy}^*C_{xy} \big),
\end{split}
\end{equation}
where in the last equality we have applied the identity
$C_{xx}^\dagger C_{xx} C_{xx}^\dagger = C_{xx}^\dagger$.
We then observe that, again, by the closed form of the optimal 
matrix $M_\env$, we can rewrite
\begin{equation} \label{gigia}
\tr \big( C_{xy}^* M_\env \big)
= \tr \big( C_{xy}^* C_{xy} C_{xx}^\dagger \big)
= \tr \big( C_{xx}^\dagger C_{xy}^*C_{xy} \big).
\end{equation}
Substituting \cref{giulia} and \cref{gigia} in \cref{kj}, 
we get the following:
\begin{equation}
\min_{M \in \Real^{d \times k}} ~
\Exp_{(x,y) \sim \env} ~ \big \| y - M \Psi(x) \big \|^2
= \tr \big( C_{yy} \big) - \tr \big( C_{xx}^\dagger C_{xy}^*C_{xy} \big)
= \tr \big( C_{yy} \big) - \big \| C_{xx}^{1/2} M_\env \big \|_F^2,
\end{equation}
where in the last equality we have applied the optimality 
condition \cref{optimality_M_gen}. In order to terminate 
the proof, we need to prove the following equality
\begin{equation} \label{banana3}
\tr \big( C_{xx}^\dagger C_{xy}^* C_{xy} \big) 
= \tr \big( C_{yy} {\rm Corr}_{xy}^* {\rm Corr}_{xy} \big).
\end{equation}
In order to do this, we proceed as follows.
Let $L^2(\Y,\Real,\rho_\Y)$ the space of functions from $\Y$ to $\Real$ 
that are square integrable w.r.t. $\rho_\Y$ and recall that, for any $f, g \in 
L^2(\Y,\Real,\rho_\Y)$, such a space is endowed with the scalar product
\begin{equation}
\langle f, g \rangle_{L^2} = \int f(y) g(y) ~d\rho_\Y(y).
\end{equation}
Throughout the rest of the proof we will use the following operator
\begin{equation}
S: \Y \to L^2(\Y,\Real,\rho_\Y) 
\quad \quad \quad 
h \mapsto \big( y \mapsto \langle h, \cdot \rangle_\Y \big),
\end{equation}
where $\langle \cdot, \cdot \rangle_\Y$ is the scalar product in $\Y$.
Its adjoint operator $S^*:L^2(\Y,\Real,\rho_\Y) \to \Y$ is such that, for any 
$h \in \Y$ and function $f\in L^2(\Y,\Real,\rho_\Y)$,
\begin{equation}
\begin{split}
\langle h, S^*f \rangle_\Y = \langle Sh, f \rangle_{L^2}
= \int f(y) \langle h, y \rangle_\Y ~d\rho_\Y(y)
= \Bigg \langle h, \int y f(y) ~d\rho_\Y(y) \Bigg \rangle_\Y.
\end{split}
\end{equation}
This implies that, for any $f \in L^2(\Y,\Real,\rho_\Y)$, 
\begin{equation}
S^*f = \int y f(y) ~d\rho_\Y(y).
\end{equation}
In order to prove the desired statement in \cref{banana3}, we 
will use the two facts below. 

{\bf First fact.} The first fact we need is to show that the 
operator $S^*S$ coincides with the covariance operator $C_{yy}$, i.e. 
\begin{equation} \label{cov_outputs_identity}
S^*S = C_{yy}
\quad \quad \quad 
C_{yy} = \Exp ~ [y \otimes y].
\end{equation}
This fact holds, as a matter of fact, we immediately see 
that, for any $h_1, h_2 \in \Y$, we can write
\begin{equation}
\begin{split}
\big \langle h_1, S^*S h_2 \big \rangle_\Y & = \big \langle Sh_1, Sh_2 \big \rangle_{L^2}
= \int \big \langle h_1, y \big \rangle_\Y \big \langle h_2, y \big \rangle_\Y ~d\rho_\Y(y) \\
& = \Bigg \langle h_1, \Bigg(\int y\otimes y~d\rho_\Y(y)\Bigg)~h_2 \Bigg \rangle_\Y
= \big \langle h_1, C_{yy} h_2 \big \rangle_\Y.
\end{split}
\end{equation}
{\bf Second fact.} Now, recall the map $\Psi: \X \to \H$ in the 
statement and define $G:\Y \to \H$ the function
\begin{equation}
G(y) = \int \Psi(x)~d\rho(x|y)
\end{equation}
mapping $y$ into the conditional expectation of $\rho(x|y)$. Assume that $G\in L^2(\Y,\H,\rho_\Y)$, the space of functions from $\Y$ to $\H$ that are square integrable w.r.t. $\rho_\Y$. Note that $L^2(\Y, \H, \rho_\Y)$ is isometric to $\H \otimes L^2(\Y,\Real,\rho_\Y)$. Denote $J:L^2(\Y, \H, \rho_\Y) \to \H \otimes L^2(\Y,\Real,\rho_\Y)$ such an isometry and let $J_G = J(G)$ the Hilbert-Schmidt operator from $L^2(\Y,\Real,\rho_\Y)$ to $\H$ associated to $G$. Recall that the isometry follows from the observation that, given a basis $\{h_i\}_{i\in\N}$ of $\H$ and $\{f_j\}_{j\in\N}$ of $L^2(\Y,\Real,\rho_\Y)$, then the sequence $\{J(g_{ij})\}_{i,j\in\N}$, with $g_{ij}$ the vector-valued functions $g_{ij}(y) = h_i f_j(y)$ and such that $J(g_{ij}) = h_i \otimes f_j$, forms a basis for $\H \otimes L^2(\Y,\Real,\rho_\Y)$ . 

By construction, denoting by $\langle \cdot, \cdot \rangle_\H$ and 
$\langle \cdot, \cdot \rangle_{\H \otimes L^2}$ the scalar 
product in $\H$ and $\H \otimes L^2(\Y,\Real,\rho_\Y)$ respectively, 
for any $f\in L^2(\Y,\Real,\rho_\Y)$ and $h \in \H$, we have
\begin{equation}
\begin{split}
\scal{J_G}{h\otimes f}_{\H \otimes L^2} = 
\scal{G(\cdot)}{hf(\cdot)}_{L^2(\Y, \H, \rho_\Y)} = 
\int \scal{G(y)}{h}_\H f(y)~d\rho_\Y(y).
\end{split}
\end{equation}

The second fact we need is to show that the operator $J_G S$ coincides 
with the covariance operator $C_{xy}$, i.e. 
\begin{equation} \label{cov_mixed_identity}
J_G S = C_{xy}
\quad \quad \quad 
C_{xy} = \Exp ~ [y \otimes \Psi(x)]. 
\end{equation}
Also this fact holds, as a matter of fact, for any $h_1 \in \H$
and $h_2 \in \Y$, we can write the following 
\begin{equation}
\begin{split}
    \scal{h_1}{J_GSh_2}_{\H} & = \scal{J_G}{h_1\otimes(Sh_2)}_{\H \otimes L^2}\\
    & = \int \scal{G(y)}{h_1}_\H (Sh_2)(y)~d\rho_\Y(y)\\
    & = \int \scal{G(y)}{h_1}_\H \scal{h_2}{y}_\Y ~d\rho_\Y(y)\\
    & = \int \scal{h_1}{(y\otimes G(y))h_2}_\H ~d\rho_\Y(y)\\
    & = \scal{h_1}{\Bigg( \int y\otimes G(y)~d\rho_\Y(y)\Bigg)h_2}_\H \\
    & = \scal{h_1}{C_{xy} h_2}_\H,
\end{split}
\end{equation}
where in the last inequality, we have exploited 
the definition of $G$ according to which
\begin{equation}
\int y\otimes G(y)~d\rho_\Y(y) 
= \int y \otimes \Bigg(\int \Psi(x)~d\rho(x|y)\Bigg)~d\rho_\Y(y)
= \int y \otimes \Psi(x) ~d\rho(x,y)
= C_{xy}.
\end{equation}

As a consequence, recalling the covariance operator $C_{xx} = \Exp ~ [\Psi(x) \otimes \Psi(x)]$ and combining the two facts above, we can write the following steps:
\begin{equation}
\begin{split}
\tr \big( C_{xx}^\dagger C_{xy}^* C_{xy} \big) 
    & = \tr \big( C_{xx}^\dagger J_G SS^*J_G^* \big) \\
    & = \tr \big( C_{xx}^\dagger J_G SS^\dagger SS^*J_G^* \big)\\
    & = \tr \big( C_{xx}^\dagger J_G S S^\dagger S^{*\dagger} S^*S S^*J_G^* \big)\\
    & = \tr \big( C_{xx}^\dagger J_G S(S^*S)^\dagger (S^*S) S^*J_G^* \big)\\
    & = \tr \big( C_{xx}^\dagger J_G S C_{yy}^\dagger C S^*J_G^* \big)\\
    & = \tr \big( C_{yy}^\dagger C_{yy} S^*J_G^*C_{xx}^\dagger J_G S \big)\\
    & = \tr \big( C_{yy}^{\dagger/2} C_{yy} S^*J_G^*C_{xx}^\dagger J_G 
    S C_{yy}^{\dagger/2} \big)\\
    & = \tr \big( C_{yy} C_{yy}^{\dagger/2} S^*J_G^*C_{xx}^\dagger J_G S 
    C_{yy}^{\dagger/2} \big) \\
    & = \tr \big( C_{yy} C_{yy}^{\dagger/2} C_{xy} C_{xx}^\dagger C_{xy}^*
    C_{yy}^{\dagger/2} \big) \\
    & = \tr \big( C_{yy} {\rm Corr}_{xy}^* {\rm Corr}_{xy} \big),
\end{split}
\end{equation}
where, in the first equation we have used \cref{cov_mixed_identity},
in the second, third and fourth equality we have used the following 
standard relations
\begin{equation}
S = SS^\dagger S 
\quad \quad 
S^\dagger = S^\dagger S^{*\dagger} S^*
\quad \quad  
(S^*S)^\dagger = S^\dagger S^{*\dagger},
\end{equation}
in the fifth equality we have used \cref{cov_outputs_identity},
in the eighth equality we have exploited the commuting property 
$C_{yy}^{\dagger/2} C_{yy} = C_{yy} C_{yy}^{\dagger/2}$,
in the ninth equality we have used again \cref{cov_mixed_identity}
and, finally, in the last equality, we have introduced the definition
of the correlation operator
\begin{equation}
{\rm Corr}_{xy} = C_{xx}^{\dagger/2}  C_{xy} C_{yy}^{\dagger/2},
\end{equation}
which is used in Canonical Correlation Analysis.
\end{proof}

We now have all the ingredient for the proof of \cref{oracle_meta_parameters}.

\begin{proof}{\bf of \cref{oracle_meta_parameters}.}
We start from recalling the problem we want to solve:
\begin{equation} 
\min_{M \in \Real^{d \times k}, b \in \Real^d}
~ {\rm Var}_\env(\tau_{M,b})^2  
= \min_{M \in \Real^{d \times k}, b \in \Real^d} ~
\Exp_{(\task, s) \sim \env} ~ \big \| w_\task - (M \Phi(s) + b) \big \|^2.
\end{equation}
By taking the derivatives w.r.t. $b$, we conclude that the matrix 
$M_\env \in \Real^{d \times k}$ and the vector $b_\env \in \Real^d$ 
minimizing the term above satisfy 
\begin{equation}
w_\env = M_\env ~ \nu_\env + b_\env,
\end{equation}
or, equivalently,
\begin{equation}
b _\env= w_\env - M_\env ~ \nu_\env.
\end{equation}
Exploiting this equality, we can rewrite our problem above as
\begin{equation*}
\min_{M \in \Real^{d \times k}, b \in \Real^d} ~
\Exp_{(\task, s) \sim \env} ~ \big \| w_\task - (M \Phi(s) + b) \big \|^2
= \min_{M \in \Real^{d \times k}} ~ \Exp_{(\task, s) \sim \env} ~ 
\big \| (w_\task - w_\env) - M \bigl( \Phi(s) - \nu_\env \bigr) \big \|^2.
\end{equation*}
We now observe that the problem above has the same form 
of the problem considered in \cref{extracting_cov}, once one
identifies $\X = \Ss$ (the space of the side information), 
$x = s$, $y = w_\task - w_\env$ and $\Psi(x) = \Phi(s)
- \nu_\env$. The desired statements automatically derive
from the application of \cref{extracting_cov} to our context.
\end{proof}


\section{Proofs of the statements in \cref{proposed_method}}
\label{proofs_proposed_method_sec}

In this section we report the proofs of the statements we used
in \cref{proposed_method} in order to prove the expected excess 
risk bound for \cref{OGDA2_paper} in \cref{bound_estimated_bias}.
We start from proving in \cref{properties_surrogate_proof}
the properties of the surrogate functions in \cref{properties_surrogate}.
Then, in \cref{proof_conv_rate_surr}, we give the convergence rate 
of \cref{OGDA2_paper} on the surrogate problem in \cref{surrogate_linear}.
We conclude by describing in \cref{implementation_kernels}
how \cref{OGDA2_paper} can be implemented
by computing only evaluations of the kernel associated to the 
feature map $\Phi$, without the need of 
explicitly evaluating the feature map itself. This is useful when the 
space in which the image of the feature map lies is high (or even infinite)
dimensional.

\subsection{Proof of \cref{properties_surrogate}}
\label{properties_surrogate_proof}

We now prove the properties of the surrogate functions in 
\cref{properties_surrogate}.

\PropertiesSurrogate*

\begin{proof} 
We are interested in studying the properties of the surrogate function 
\begin{equation}
\LL \big (\cdot, \cdot, s, \Zn \big ): 
\Real^{d \times k} \times \Real^d \to \Real
\end{equation}
in \cref{surrogate_linear}. We start from observing that, 
such a function coincides with the composition of the Moreau envelope  
$\hat \Delta(\cdot, \Zn): \Real^d \to \Real$ of the empirical risk $\cR_\Zn$:
\begin{equation} \label{banana5}
\theta \mapsto 
\hat \Delta(\theta, \Zn) = \min_{w \in \Real^d} ~ \cR^\la_\Zn(w)
\quad \quad \quad 
\cR_\Zn^\la(w) = \frac{1}{n} \sum_{i = 1}^n \ell(\langle x_i, w \rangle, y_i) 
+ \frac{\la}{2} \| w - \theta \|^2
\end{equation}
with the linear transformation 
\begin{equation}
s \in \Ss \mapsto \tau_{M,b}(s) = M \Phi(s) + b \in \Real^d.
\end{equation}
In other words, for any $M \in \Real^{d \times k}$ and $b \in \Real^d$,
we can write
\begin{equation}
\LL \big (M, b, s, \Zn \big ) = \hat \Delta(\tau_{M,b}(s), \Zn).
\end{equation}
As a consequence, since the Moreau envelope is convex and differentiable
\cite[Prop. $12.29$]{bauschke2011convex}, the resulting surrogate function 
$\LL \big (\cdot, \cdot, s, \Zn \big )$
is convex and differentiable over $\Real^{d \times k} \times \Real^d$.
The closed form of the gradient in \cref{gradient1} 
directly derives from the composition rule for derivatives and the closed
form of the gradient of the Moreau envelope \cite[Prop. $12.29$]{bauschke2011convex}
\begin{equation} \label{banana4}
\nabla \hat \Delta(\cdot, \Zn) (\theta) = 
- \la \big( A (\theta, \Zn ) - \theta \big) \in \Real^d,
\end{equation}
with $A(\theta, \Zn)$ defined as in \cref{RERM_bias}.
Consequently, we get 
\begin{equation}
\nabla \LL\big (\cdot, \cdot, s, \Zn \big )(M, b) =
\nabla \hat \Delta(\cdot, \Zn) (\tau_{M, b}(s))
\left(\begin{array}{c}\Phi(s)\\ 1\end{array}\right) \trans,
\end{equation}
coinciding with the desired closed form in \cref{gradient1}.
Finally, we observe that, as shown in \cite[Prop. $4$]{denevi2019learning}, under \cref{ass_1}, for any $\theta \in \Real^d$, we have
\begin{equation}
\big \| \nabla \hat \Delta(\cdot, \Zn) (\theta) \big \|^2 \le L^2 \rx^2.
\end{equation}
As a consequence, exploiting the rewriting above, 
\cref{ass_1} and \cref{ass_3}, we get the desired bound in \cref{bound_grad_norm}:
\begin{equation}
\begin{split}
\big \| \nabla \LL\big (\cdot, \cdot, s, \Zn \big )(M, b) \big\|_F^2
& = \big \| \nabla \hat \Delta(\cdot, \Zn) \big(\tau_{M,b}(s)\big) 
\Phi(s) \trans \big \|_F^2 
+ \big \| \nabla \hat \Delta(\cdot, \Zn) \big(\tau_{M,b}(s)\big) \big \|^2 \\
& = \big \| \nabla \hat \Delta(\cdot, \Zn) \big(\tau_{M,b}(s)\big) \big \|^2 
\big \| \Phi(s) \big \|^2 
+ \big \| \nabla \hat \Delta(\cdot, \Zn) \big(\tau_{M,b}(s)\big) \big \|^2 \\
& \le L^2 \rx^2 (K^2 + 1),
\end{split}
\end{equation}
where in the second equality above we have exploited the fact that for any 
vectors $a \in \Real^d$ and $b \in \Real^s$, we have
\begin{equation}
\big \| a b \trans \big \|_F^2 
= \tr \big( b a \trans a b \trans \big) 
= \tr \big( b \trans b a \trans a \big) 
= \| a \|^2 \| b \|^2.
\end{equation}

\end{proof}

\subsection{Convergence rate of \cref{OGDA2_paper} on the surrogate 
problem in \cref{surrogate_linear}}
\label{proof_conv_rate_surr}

We now give the convergence rate of \cref{OGDA2_paper} on the surrogate
problem in \cref{surrogate_linear}.

\begin{restatable}[Convergence rate on the surrogate problem in \cref{surrogate_linear}]{proposition}{ConvergenceSurrogate} \label{convergence_surrogate}
Let $\thickbar M$ and $\thickbar b$ be the average of the iterations obtained from the application of \cref{OGDA2_paper} over the training data $(\Zn_t, s_t)_{t = 1}^T$ with constant meta-step size $\gamma > 0$ and inner regularization parameter $\la > 0$. Then, under \cref{ass_1} and \cref{ass_3}, for any $\tau_{M,b} \in \T_\Phi$, in expectation w.r.t. the sampling 
of $(\Zn_t, s_t)_{t = 1}^T$,
\begin{equation} \label{B_bias_1}
\Exp ~ \hat \ee_\env \bigl(\tau_{\thickbar M, \thickbar b} \bigr) 
- \hat \ee_\env \bigl( \tau_{M, b} \bigr) \le \frac{\gamma L^2 \rx^2 (K^2 + 1)}{2} 
+ \frac{\big\| (M, b) \big\|_F^2}{2 \gamma T}.
\end{equation}
\end{restatable}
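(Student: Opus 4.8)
The plan is to recognize \cref{OGDA2_paper} as vanilla stochastic gradient descent on the convex objective $\hat\ee_\env$ and apply the textbook online‑to‑batch argument. First I would collect the two structural ingredients, both already available: by \cref{properties_surrogate} each meta‑loss $\LL(\cdot,\cdot,s,\Zn)$ is convex and differentiable with $\nor{\nabla\LL(\cdot,\cdot,s,\Zn)(M,b)}_F^2 \le L^2\rx^2(K^2+1)$; since $\hat\ee_\env$ is the expectation of these functions over $(\task,s)\sim\env$ and $\Zn\sim\task^n$ (see \cref{surrogate_linear}), it is itself convex and differentiable, and the uniform gradient bound lets us interchange gradient and expectation, so that $\nabla\hat\ee_\env(M,b) = \Exp_{(\task,s)\sim\env}\Exp_{\Zn\sim\task^n}\nabla\LL(\cdot,\cdot,s,\Zn)(M,b)$.

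Next, write $g_t = \nabla\LL(\cdot,\cdot,s_t,\Zn_t)(M_t,b_t)$ for the stochastic gradient used at step $t$ and let $\F_{t-1}$ be the $\sigma$‑algebra generated by the first $t-1$ samples; since $(M_t,b_t)$ is $\F_{t-1}$‑measurable while $(\task_t,s_t,\Zn_t)$ is independent of $\F_{t-1}$, the first paragraph gives $\Exp[g_t\mid\F_{t-1}] = \nabla\hat\ee_\env(M_t,b_t)$. Expanding $\nor{(M_{t+1},b_{t+1})-(M,b)}_F^2$ through the update rule $(M_{t+1},b_{t+1}) = (M_t,b_t)-\gamma g_t$, for a fixed but arbitrary $\tau_{M,b}\in\T_\Phi$, yields the standard identity
\[
\big\langle g_t,\,(M_t,b_t)-(M,b)\big\rangle = \frac{\nor{(M_t,b_t)-(M,b)}_F^2-\nor{(M_{t+1},b_{t+1})-(M,b)}_F^2}{2\gamma}+\frac{\gamma}{2}\nor{g_t}_F^2 .
\]
Taking conditional expectation, bounding the left‑hand side from below by convexity of $\hat\ee_\env$ (namely $\langle\nabla\hat\ee_\env(M_t,b_t),(M_t,b_t)-(M,b)\rangle \ge \hat\ee_\env(\tau_{M_t,b_t})-\hat\ee_\env(\tau_{M,b})$) and the last term from above by $L^2\rx^2(K^2+1)$, then summing over $t=1,\dots,T$: the distance terms telescope, the final one is discarded (nonnegative), and $M_1=0$, $b_1=0$ turns the first into $\nor{(M,b)}_F^2$. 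This gives $\sum_{t=1}^T\Exp[\hat\ee_\env(\tau_{M_t,b_t})-\hat\ee_\env(\tau_{M,b})] \le \tfrac{\nor{(M,b)}_F^2}{2\gamma}+\tfrac{\gamma T L^2\rx^2(K^2+1)}{2}$.

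Finally, dividing by $T$ and using that $\tau_{M,b}(s)=M\Phi(s)+b$ is affine in $(M,b)$ together with convexity of $\hat\ee_\env$, Jensen's inequality gives $\hat\ee_\env(\tau_{\thickbar M,\thickbar b}) \le \tfrac1T\sum_{t=1}^T\hat\ee_\env(\tau_{M_t,b_t})$, and combining with the previous display produces the claimed bound. The only slightly delicate step is the unbiasedness/interchange claim of the first paragraph — it needs the dominated‑convergence justification afforded by the uniform gradient bound — while the rest is the classic projection‑free SGD/regret computation (a block‑matrix variant of the one in \cite{denevi2019online}, now carrying $M$ alongside $b$), with $\gamma$ eventually tuned in \cref{bound_estimated_bias} precisely to balance the two terms.
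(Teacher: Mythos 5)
Your proposal is correct and follows exactly the paper's route: the paper's proof also recognizes Alg.~1 as SGD on the convex, Lipschitz surrogate, invokes \cref{properties_surrogate} for the gradient bound, and then cites the textbook online-to-batch SGD convergence result (Shalev-Shwartz \& Ben-David, Lemma 14.1 and Thm.~14.8) rather than re-deriving it; you have simply written out that cited derivation (unbiasedness of the stochastic gradient, the one-step descent identity, telescoping with $M_1=0$, $b_1=0$, and Jensen on the averaged iterate).
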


\begin{proof}
We observe that \cref{OGDA2_paper} coincides with Stochastic
Gradient Descent applied to the convex and Lipschitz (see \cref{properties_surrogate})
surrogate problem in 
\cref{surrogate_linear}:
\begin{equation}
\min_{M \in \Real^{d \times k}, b \in \Real^d} ~ \hat \ee_\env(\tau_{M, b})
\quad \quad 
\hat \ee_\env(\tau_{M, b}) = \Exp_{(\task, s) \sim \env} 
~ \Exp_{\Zn \sim \task^n} ~ \LL\big (M, b, s, \Zn \big ).
\end{equation}
As a consequence, by standard arguments
(see e.g. \cite[Lemma $14.1$, Thm. $14.8$]{shalev2014understanding} 
and references therein), for any $\tau_{M,b} \in \T_\Phi$, we have
\begin{equation}
\Exp~ \hat \ee_\env \bigl(\tau_{\thickbar M, \thickbar b} \bigr) 
- \hat \ee_\env \bigl( \tau_{M, b} \bigr) \le \frac{\gamma}{2 T} 
\sum_{t = 1}^T \Exp ~ \big \| \nabla \LL\big (\cdot, \cdot, 
s, \Zn_t \big )(M_t, b_t) \big\|_F^2
+ \frac{\big\| (M, b) \big\|_F^2}{2 \gamma T}.
\end{equation}
The desired statement derives from combining this 
bound with the bound on the norm of the meta-subgradients
in \cref{bound_grad_norm} in \cref{properties_surrogate}.
\end{proof}

\subsection{Implementation of \cref{OGDA2_paper} with kernels}
\label{implementation_kernels}

We conclude this section by describing how \cref{OGDA2_paper} 
can be implemented by computing only evaluations of the kernel 
associated to the feature map $\Phi$.
We describe this in the following lemma exploiting standard arguments 
from online learning with kernels literature
(see e.g. \cite{kivinen2004online,singh2012online,shalev2014understanding}).

\begin{lemma}[Implementation of \cref{OGDA2_paper} by kernel's evaluations] \label{lemma_only_kernel}
Let $\bigl(M_t, b_t, \theta_t \big)_{t = 1}^T$ be the iteration generated 
by \cref{OGDA2_paper} with meta-step size $\gamma \ge 0$. Then, 
\begin{equation}
\theta_{t+1} = - \gamma \sum_{j = 1}^t 
\nabla \hat \Delta(\cdot, \Zn_j) (\tau_{M_j, b_j}(s_j))
~ k(s_j, s_{t+1}) + b_{t+1},
\end{equation}
where the function $\hat \Delta$ and its gradients 
$\nabla \hat \Delta(\cdot, \Zn_j)$ are defined in \cref{banana5}
and \cref{banana4} above and we have introduced 
the evaluation 
\begin{equation}
k(s_j, s_{t+1}) = \Phi(s_j) \trans \Phi(s_{t+1}),
\end{equation}
of the kernel associated to the feature map $\Phi$.
\end{lemma}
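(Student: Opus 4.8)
The plan is to unroll the update rule for $M_t$ in \cref{OGDA2_paper} and then recognize the inner product of two feature vectors as a kernel evaluation. First I would recall the factorization of the meta-gradient established in \cref{properties_surrogate}, namely that $\nabla\LL(\cdot,\cdot,s_t,\Zn_t)(M_t,b_t) = \nabla\hat\Delta(\cdot,\Zn_t)(\theta_t)\,(\Phi(s_t)\trans,1)$, where $\hat\Delta$ and $\nabla\hat\Delta(\cdot,\Zn_t)$ are as in \cref{banana5} and \cref{banana4} and $\theta_t = \tau_{M_t,b_t}(s_t)$. Since the two blocks of this gradient are decoupled, the update of \cref{OGDA2_paper} splits into
\[
M_{t+1} = M_t - \gamma\,\nabla\hat\Delta(\cdot,\Zn_t)(\tau_{M_t,b_t}(s_t))\,\Phi(s_t)\trans, \qquad b_{t+1} = b_t - \gamma\,\nabla\hat\Delta(\cdot,\Zn_t)(\tau_{M_t,b_t}(s_t)).
\]

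Next, using the initialization $M_1 = 0$, a straightforward induction on $t$ gives
\[
M_{t+1} = -\gamma\sum_{j=1}^{t} \nabla\hat\Delta(\cdot,\Zn_j)(\tau_{M_j,b_j}(s_j))\,\Phi(s_j)\trans .
\]
Substituting this into $\theta_{t+1} = \tau_{M_{t+1},b_{t+1}}(s_{t+1}) = M_{t+1}\Phi(s_{t+1}) + b_{t+1}$ and using linearity yields
\[
\theta_{t+1} = -\gamma\sum_{j=1}^{t} \nabla\hat\Delta(\cdot,\Zn_j)(\tau_{M_j,b_j}(s_j))\,\big(\Phi(s_j)\trans\Phi(s_{t+1})\big) + b_{t+1},
\]
and identifying $\Phi(s_j)\trans\Phi(s_{t+1}) = k(s_j,s_{t+1})$ gives exactly the claimed formula. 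The vectors $b_{t+1}$ are themselves obtained from the scalar recursion above, which only involves the gradients $\nabla\hat\Delta(\cdot,\Zn_j)(\theta_j)$, so no explicit access to $\Phi$ is ever needed.

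There is essentially no obstacle: the argument is pure bookkeeping. The only point worth stating carefully is that each $\theta_j$ occurring on the right-hand side can itself be written using kernel evaluations alone — this follows from the same induction, since $\theta_j$ depends on $M_j$ (expanded as above, involving only $k(s_i,s_j)$ for $i<j$) and on $b_j$, so the kernelized iteration is well-defined and can be run forward in $t$.
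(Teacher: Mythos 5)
Your proof is correct and follows essentially the same route as the paper: factor the meta-gradient as in \cref{properties_surrogate}, decouple the $M$ and $b$ updates, unroll $M_{t+1}$ by induction using $M_1=0$, and substitute into $\theta_{t+1}=M_{t+1}\Phi(s_{t+1})+b_{t+1}$ to expose the kernel evaluations. Your closing remark about the recursion being well-defined using only kernel evaluations is a helpful clarification but does not alter the argument.
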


\begin{proof}
Exploiting the closed form of the meta-subgradient 
in \cref{gradient1} in \cref{properties_surrogate}, we
can rewrite more explicitly the update step of 
\cref{OGDA2_paper} as follows:
\begin{equation}
\begin{split}
& M_{t+1} = M_t - \gamma ~ \nabla \hat \Delta(\cdot, \Zn_t) (\tau_{M_t, b_t}(s_t))
\Phi(s_t) \trans \\
& b_{t+1} = b_t - \gamma ~ \nabla \hat \Delta(\cdot, \Zn_t) (\tau_{M_t, b_t}(s_t)) \\
& \theta_{t + 1} = M_{t+1} \Phi(s_{t+1}) + b_{t+1}.
\end{split}
\end{equation}
By induction argument on the iteration $t$, one can easily see that 
the update of the matrix $M_{t+1}$ can be equivalently rewritten as
\begin{equation}
M_{t+1} = - \gamma \sum_{j = 1}^t \nabla \hat \Delta(\cdot, \Zn_j) 
(\tau_{M_j, b_j}(s_j))
\Phi(s_j) \trans.
\end{equation}
As a consequence, we can rewrite the update of the bias vector $\theta_{t+1}$
as follows
\begin{equation}
\begin{split}
\theta_{t+1} & = M_{t+1} \Phi(s_{t+1}) + b_{t+1} \\
& = - \gamma \sum_{j = 1}^t \nabla \hat \Delta(\cdot, \Zn_j) (\tau_{M_j, b_j}(s_j)) 
\Phi(s_j) \trans \Phi(s_{t+1}) + b_{t+1} \\
& = - \gamma \sum_{j = 1}^t \nabla \hat \Delta(\cdot, \Zn_j) (\tau_{M_j, b_j}(s_j))
~ k(s_j, s_{t+1}) + b_{t+1}.
\end{split}
\end{equation}
This last equation coincides with the desired statement.
\end{proof}



\section{Additional real experiments and experimental details}
\label{exp_details}

In the first part of this section we report two additional real experiments,
in the second part we report the implementation details we omitted in
the main body.

\subsection{Additional real experiments}
\label{add_exps}

\begin{figure}[t]
\begin{minipage}[t]{0.49\textwidth}  
\centering
\includegraphics[width=1\textwidth]{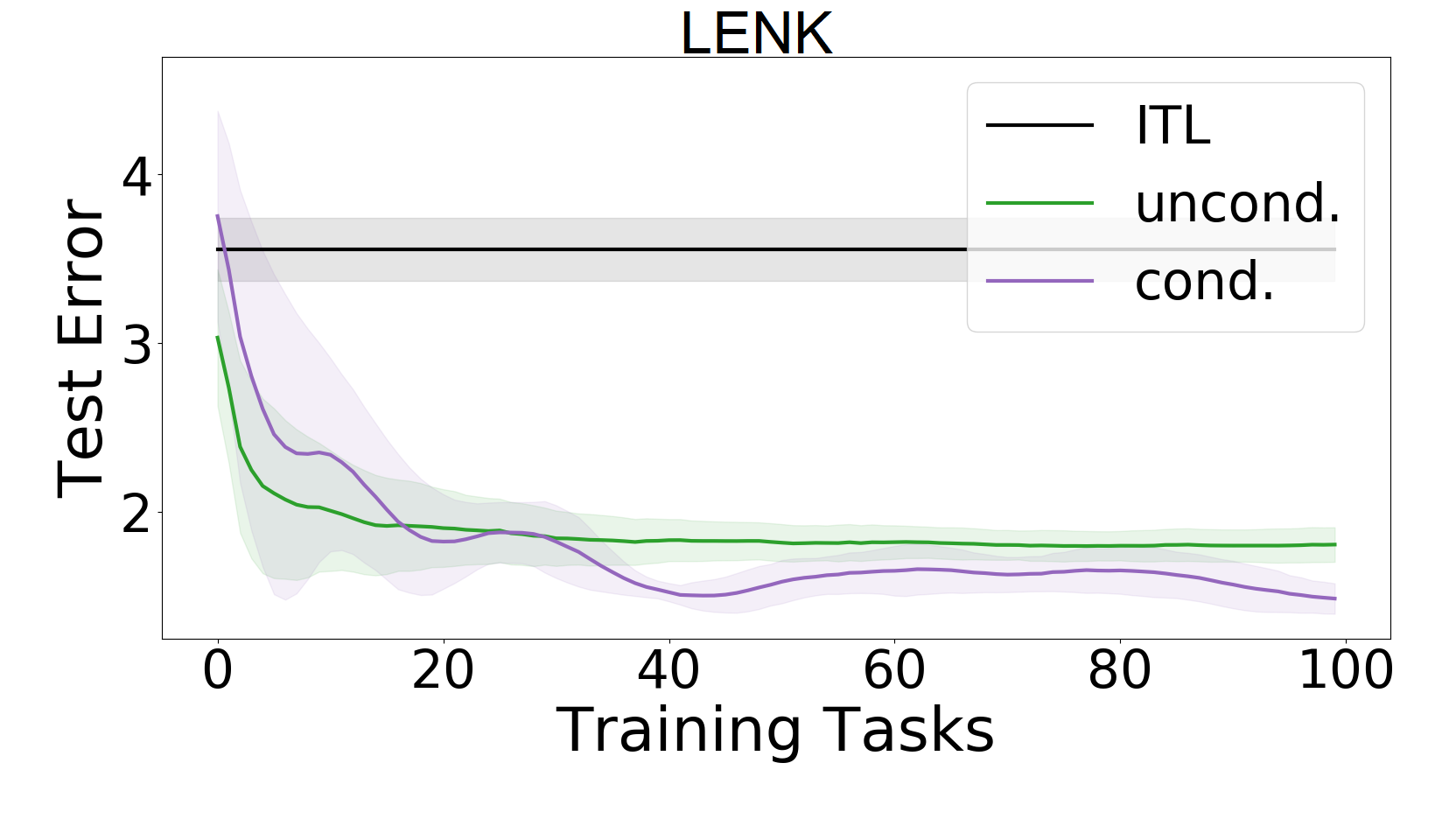}
\end{minipage}
\begin{minipage}[t]{0.49\textwidth}
\centering
\includegraphics[width=1\textwidth]{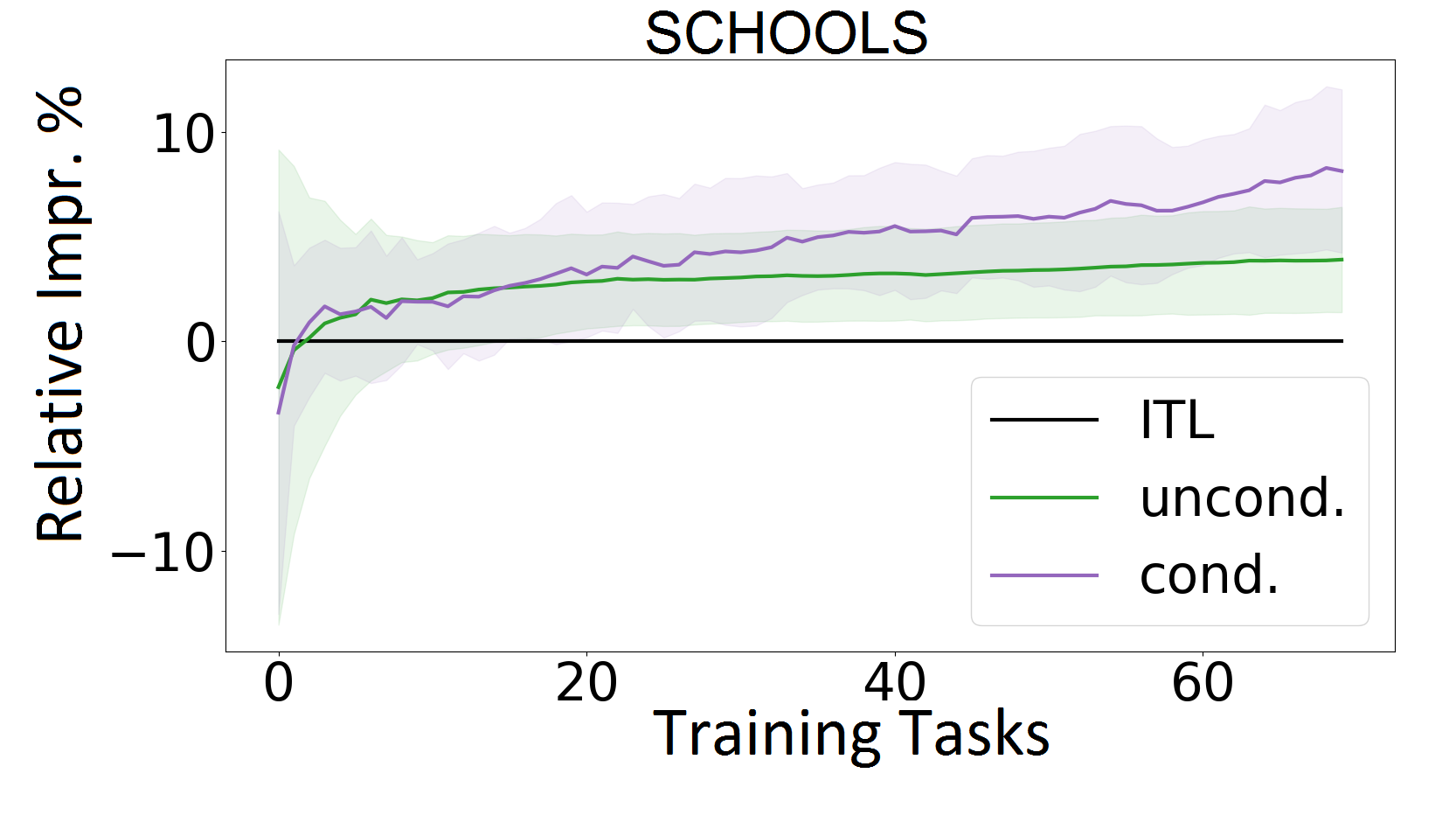} \\ 
\end{minipage}
\caption{Performance (averaged over $10$ seeds) 
of different methods w.r.t. an increasing number of tasks.
Lenk dataset (left), Schools dataset (right).
\label{fig_exps_real}}
\end{figure}

We tested the performance of our method also on two regression 
problems on the Lenk and the Schools datasets. Also in these cases, 
we evaluated the errors by the absolute loss and we implemented
the variant of the methods with the online inner algorithm in \cref{online_inner_algorithm}. We used again as side information 
datapoints. \\

\paragraph{Lenk dataset.} We considered the computer survey data from \cite{lenk1996hierarchical,Andrew}, in which $T_{\rm tot} = 180$ people 
(tasks) rated the likelihood of purchasing one of $n_{\rm tot} = 20$ different
personal computers. The input represents $d = 13$ different computers'
characteristics, while the output is an integer rating from $0$ to $10$. 
\cref{fig_exps_real} (left) shows that, coherently to previous literature
\cite{denevi2019learning}, the unconditional approach significantly 
outperforms ITL, but the performance of its conditional counterpart is 
even better. \\

\paragraph{Schools dataset.}
We considered the Schools dataset \cite{argyriou2008convex}, consisting of 
examination records from $T_{\rm tot} = 139$ schools. Each school is 
associated to a task, individual students are represented by a 
features' vectors $x \in \Real^d$, with $d = 26$, and their exam scores 
to the outputs. The sample size $n_{\rm tot}$ varies across the 
tasks from a minimum $24$ to a maximum $251$. 
\cref{fig_exps_real} (right) shows that, also in this case, the unconditional 
approach brings a meaningful improvement w.r.t. ITL, but the gain provided
by its conditional counterpart is even more evident.

\subsection{Experimental details}
\label{exps_details_sub}

In order to tune the hyper-parameters $\la$ and $\gamma$ our experiments, 
we followed the same validation procedure described 
in \cite[App. I]{denevi2019learning}. Such a procedure requires performing 
a meta-training, a meta-validation and a meta-test phase on a separate sets of
$T_{\rm tr}$ training tasks, $T_{\rm va}$ validation tasks and $T_{\rm te}$ 
test tasks. Each task in the training set is observed by a corresponding dataset 
$\Zn_{\rm tr}$ of $n = n_{\rm tr}$ points, while, the tasks in the test and 
validation sets are all provided with a corresponding training dataset $\Zn_{\rm tr}$ 
of $n_{\rm tr}$ points and a corresponding test dataset $\Zn_{\rm te}$ of 
$n_{\rm te}$ points.

Specifically, in our experiments, we applied the validation procedure above as 
described in the following.\\

\paragraph{Synthetic clusters} We considered $14$ candidates values for both $\la$ and $\eta$ in the range $[10^{-5}, 10^5]$ with logarithmic spacing and we evaluated the performance of the estimated feature maps by using $T =T_{\rm tr} = 300$, $T_{\rm va} = 100$, $T_{\rm te} = 80$ of the available tasks for meta-training, meta-validation and meta-testing, respectively. In order to train and to test the inner algorithm, we splitted each within-task dataset into $n = n_{\rm tr} = 50\% ~ n_{\rm tot}$ for training and $n_{\rm te} = 50\% ~ n_{\rm tot}$ for test. We implemented our conditional method using as side information the input points $X = (x_i)_{i = 1}^n \in \bigcup_{n\in\N}\X^n$ and the feature map $\Phi:\bigcup_{n\in\N}\X^n \to \Real^d$ defined by $\Phi(X) = \frac{1}{n} \sum_{i = 1}^n x_i$. \\

\paragraph{Synthetic circle} We considered $16$ candidates values for both $\la$ and $\eta$ in the range $[10^{-7}, 10^7]$ with logarithmic spacing and we splitted the data as in the
clusters' settings above. As already spoiled in the main body, we applied our conditional 
approach with two different feature maps: the true underlying feature map $\Phi(s) = 
({\rm cos}(2 \pi s), {\rm sin}(2 \pi s))$ and the feature map mimicking a Gaussian 
distribution by Fourier random features described below (at the end of this section) with parameters $k = 50$ and $\sigma = 10$.\\

\paragraph{Lenk dataset}
We considered $14$ candidates values for both $\la$ and $\eta$ in the range $[10^{-5}, 10^5]$ with logarithmic spacing and we evaluated the performance of the estimated feature maps by splitting the tasks into $T =T_{\rm tr} = 100$, $T_{\rm va} = 40$, $T_{\rm te} = 30$ tasks used for meta-training, meta-validation and meta-testing, respectively. In order to train and to test the inner algorithm, we splitted each within-task dataset into $n = n_{\rm tr} = 16$ for training and $n_{\rm te} = 4$ for test. We used as side information the datapoints $\Zn =(z_i)_{i = 1}^n$ and the feature map $\Phi:\D \to \Real^{2d}$ defined by $\Phi(\Zn) = \frac{1}{n} \sum_{i = 1}^n \phi(z_i)$, with $\phi(z_i) = {\rm vec}\big( x_i (y_i, 1) \trans \big)$, where, for any matrix $A = [a_1, a_2] \in \Real^{d \times 2}$ with columns $a_1, a_2 \in \Real^d$, ${\rm vec}(A) = (a_1, a_2) \trans \in \Real^{2d}$.\\

\paragraph{Schools dataset}
We considered $14$ candidates values for both $\la$ and $\eta$ in the range $[10^{-5}, 10^5]$ with logarithmic spacing and we evaluated the performance of the estimated feature maps by splitting the tasks into $T =T_{\rm tr} = 70$, $T_{\rm va} = 39$, $T_{\rm te} = 30$ tasks used for meta-training, meta-validation and meta-testing, respectively. In order to train and to test the inner algorithm, we splitted each within-task dataset into $n = n_{\rm tr} = 75\% ~ n_{\rm tot}$ for training and $n_{\rm te} = 25\% ~ n_{\rm tot}$ for test. We used as side information the inputs $X = (x_i)_{i = 1}^n$ and the feature map mimicking a Gaussian distribution by Fourier random features described below (at the end of this section) with parameters $k = 1000$ and 
$\sigma = 100$.\\

\paragraph{Feature map by Fourier random features}
We now describe the feature map mimicking 
a Gaussian distribution by Fourier random features \cite{rahimi2008random} 
we used in our synthetic circle experiment and Schools dataset experiment. 
We recall that, in these cases, we considered as side information the  
inputs $X = (x_i)_{i = 1}^n$. The feature map above was then defined as 
$\Phi(X) = \frac{1}{n} \sum_{i = 1}^n \phi(x_i)$, where, $\phi$ was 
built as follows. 
We first introduced an integer $k \in \N$ and a constant $\sigma \in \Real$. 
We then sampled a vector $v \in \Real^k$ from the uniform distribution over 
$[0,2 \pi]^k$ and a matrix $U \in \Real^{k \times d}$ is 
sampled from the Gaussian distribution $\mathcal{N}(0,\sigma I)$. 
We then defined
\begin{equation}
\phi(x_i) = \sqrt{\frac{2}{k}} ~ {\rm cos}\big(U x_i + v \big) \in \Real^k,
\end{equation}
where ${\cos}(\cdot)$ is applied component-wise to the vector.\\

We conclude this section reporting the characteristics of the machine 
we used for running our experiments and the complexity of our method 
in \cref{OGDA2_paper}.

All the experiments were conducted on a workstation with 4 Intel Xeon E5-2697 V3 2.60Ghz CPUs and 256GB RAM.

The variant of our method in \cref{OGDA2_paper} for biased regularization using the batch inner algorithm in \cref{RERM_bias} has a time and space complexity $\mathcal{O}(d (k+n))$. The variant for fine-tuning using the online inner algorithm in \cref{online_inner_algorithm} has a time and space complexity $\mathcal{O}(dk)$.

\end{document}